\newtheorem{theorem}{Theorem}
\theoremstyle{definition}
\newtheorem{definition}{Definition}
\def\BibTeX{{\rm B\kern-.05em{\sc i\kern-.025em b}\kern-.08em
    T\kern-.1667em\lower.7ex\hbox{E}\kern-.125emX}}
\title{Are Graph Attention Networks Able to Model Structural Information?}
\author{
Farshad Noravesh$^1$
\and
Reza Haffari$^2$\and
Layki Soon$^3$\And
Arghya Pal$^4$\\
\affiliations
$^{1,2,3,4}$Monash University\\
\emails
\{Farshad.Noravesh, Gholamreza.Haffari\}@monash.edu,
soon.layki@monash.edu,
arghya.pal@monash.edu
}
\begin{document}

\maketitle

\begin{abstract}
Graph Attention Networks (GATs) have emerged as powerful models for learning expressive representations from such data by adaptively weighting neighboring nodes through attention mechanisms. However, most existing approaches primarily rely on node attributes and direct neighborhood connections, often overlooking rich structural patterns that capture higher-order topological information crucial for many real-world datasets. In this work, we present the Graph Structure Attention Network (GSAT), a novel extension of GAT that jointly integrates attribute-based and structure-based representations for more effective graph learning. GSAT incorporates structural features derived from anonymous random walks (ARWs) and graph kernels to encode local topological information, enabling attention mechanisms to adapt based on the underlying graph structure. This design enhances the model’s ability to discern meaningful relational dependencies within complex data. Comprehensive experiments on standard graph classification and regression benchmarks demonstrate that GSAT achieves consistent improvements over state-of-the-art graph learning methods, highlighting the value of incorporating structural context for representation learning on graphs.
\end{abstract}

\section{Introduction}
In graph neural networks (GNNs), message passing is a fundamental mechanism for aggregating information from neighboring nodes, enabling effective learning on graph-structured data. However, traditional message-passing schemes often suffer from limitations such as oversmoothing and limited receptive fields \cite{KhangNguyen2022}. Combining random walk (RW) techniques with message passing offers a powerful approach to enhance GNN performance by capturing long-range dependencies while preserving local structural information \cite{DexiongChen2024}. RW enables nodes to sample diverse neighborhoods beyond immediate neighbors, facilitating more expressive feature propagation \cite{DiJin2022}. 
\par
Structural embedding is essential in message passing for GNNs because it provides a way to encode the topological properties of nodes within the graph. In standard message-passing frameworks, a node aggregates information from its neighbors to update its representation. However, without structural embeddings, this process may fail to capture higher-order connectivity patterns, leading to suboptimal representations, especially in graphs with complex structures or heterophily \cite{XinZheng2022} . Structural embeddings, such as positional or walk-based embeddings, encode information about a node’s role and position within the graph, enhancing the expressiveness of message passing. This allows GNNs to generalize better across different graph structures and improve performance in tasks like graph classification, node classification and link prediction. The present work is only focused on graph classification and regression. The closest approach to our work for combining RW and message passing is \cite{DexiongChen2024} which aggregates the RW embeddings and sends messages from these aggregated embeddings and finally updates the node representation. We take a different approach and generalize the graph attention to enforce the graph to attend to different structural patterns of neighbour nodes in a data driven way. Moreover, Our approach employs a preprocessing step based on Word2Vec training \cite{Mikolov2013} to obtain embeddings of ARWs.
\par
In graph attention network(GAT) \cite{Velickovic2018}, walk length influences the adaptive weighting of neighbors through the attention mechanism. Unlike GCN, which uniformly aggregates features, GAT assigns different importance to nodes in the neighborhood, mitigating the over-smoothing problem to some extent. Longer walks in GAT allow the model to capture more distant relationships, but attention scores decay as the distance increases, reducing their impact. Shorter walks, on the other hand, focus on local node interactions, leveraging the attention mechanism to refine feature aggregation within a limited neighborhood. While GAT is generally more robust to over-smoothing than GCN, excessive walk lengths can still introduce noise and increase computational complexity. Therefore, finding an optimal walk length remains essential for effectively utilizing GAT in graph learning tasks.
\par
In contrast to ARW, Graph Kernel Neural Networks (GKNNs) are more flexible in that they can incorporate node labels or attributes directly into the kernel computation, enabling them to distinguish between nodes with different semantic meanings. This makes GKNNs suitable for applications where node identities or features carry important information, such as in chemistry, where atom types (node labels) significantly impact the molecular properties. While ARW focuses on invariant structural patterns, GKNNs can blend both structural and attribute-based similarities, offering a more detailed graph representation when node-specific information is relevant. However, this flexibility often comes with higher computational costs compared to the sampling-based simplicity of ARW.
\par
In this paper, we leverage both ARW and GKNNs to capture structural information for molecular classification and graph regression tasks respectively. Note that we first train them and use them as a pretrained model to be used for GSAT. We used QM9, ZINC dataset for graph property prediction. We then use these structural node representation and introduce a novel model GSAT that learns how to attend to different nodes based on its structural representation. This attention allows node features to be passed in a message passing system that is controlled by structural information.

\subsection{Main Contributions}
The contributions of our work are as follows:
\begin{enumerate}
\item{We introduced two methods to produce a pretrained model for structural node embedding needed for downstream tasks.}
\item{We generalized the GAT to the case that combines structural information with node attributes to guide the message passing via appropriate learned edge attention weights.}
\item{We outperformed state-of-the-art (SOTA) baselines for graph classification on some benchmarks with only one layer of GSAT which is an indication that higher layers are not necessary if structural information could be captured adequately using ARW.}
\item{Sensitivity analysis is performed to investigate the effect of ARW hyperparameters such as walk length and the size of structural features on the graph classification performance.}
\end{enumerate}

\section{Related Work}
\subsection{Graph Kernels}
\cite{Cosmo2024} uses graph kernels that compute an inner product on graphs, to extend the standard convolution operator to the graph domain and provides structural masks that are learned during the training process. Similarly, \cite{AosongFeng2022} introduced KerGNNs which utilizes trainable hidden graphs as graph filters and are combined with subgraphs centered at each node to update node embeddings using graph kernels. The drawback of these types of kernels is the limited assumption on the number of learnable structures. Even a big number does not resolve the issue since many of the structures would then have high correlation with each others. Graph kernel methods don’t scale well to large graphs. The drawback of methods like \cite{Cosmo2024} is the lack of modeling for node neighbour structures based on label information since \cite{Cosmo2024} has focused on graph classification tasks only. Many methods such as \cite{Kalofolias2021},\cite{AosongFeng2022},\cite{Cosmo2024} that use graph kernels to model structural similarity of two nodes are ignoring the node labels as a way to model local structure and therefore can not fully capture heterogeneous graphs or tasks like node classification. 
\subsection{Random Walks}
\cite{WilliamHamilton2017} unified many graph representation learning methods such as deepWalk \cite{perozzi2014deepwalk}, Node2Vec \cite{grover2016node2vec}, and GraphSage in a framework that implements encoder, decoder, similarity measures and loss functions distinctly. \cite{YuTian2019} leverages kernels instead of encoder-decoder architecture in  \cite{WilliamHamilton2017} and implements the kernel between two nodes using feature smoothing method of Nadaraya-Watson kernel weighted average. Methods in \cite{YuTian2019} and \cite{WilliamHamilton2017} ignore the local structure of two nodes  and optimizes node embeddings so that nearby nodes in the graph have similar embedding. In many applications, two nodes that are far from each other in the global positioning may have very similar local structures such as having similar number of triangle structures. \cite{Ribeiro2017} resolved this research gap by introducing struc2vec that generates structural context for nodes. The core of struc2vec is a variable that measures the ordered degree sequence of a particular set. The set is the ring of nodes at distance k. Then the structural distance between any two nodes can be obtained recursively by measuring the distance between two ordered degree sequences corresponding to the two nodes. Another type of structure arises in heterogeneous graphs. \cite{XuanGuo2024} proposes heterogeneous anonymous walk (HAW) for representation learning on heterostructures. HAW could be seen as generalization of ARW . Thus, it maps to the same ARW in the original formulation of ARW that can distinguish two different sequences by concatenating them with node types.  
\par
Methods so far do not integrate the rich RW representations with message passing methods. To address this research gap, \cite{DexiongChen2024} put forward a novel framework that integrates them by aggregating RW embeddings and learns the encoding of RW end-to-end. However, they neglect the usage of ARW to make their modeling more generalisable. Another drawback of \cite{DexiongChen2024} is the limitation in walk embedding that the entries in the vector are limited to two sequential node embedding which neglects the richness of the whole sequence representation and cuts off the nonlocal information in the sequence since each sequence embedding can be analogous to sentence embedding in natural language processing(NLP).

\section{Background}

\begin{figure*}[t]
    \centering
    \fbox{\includegraphics[width=0.6\textwidth]{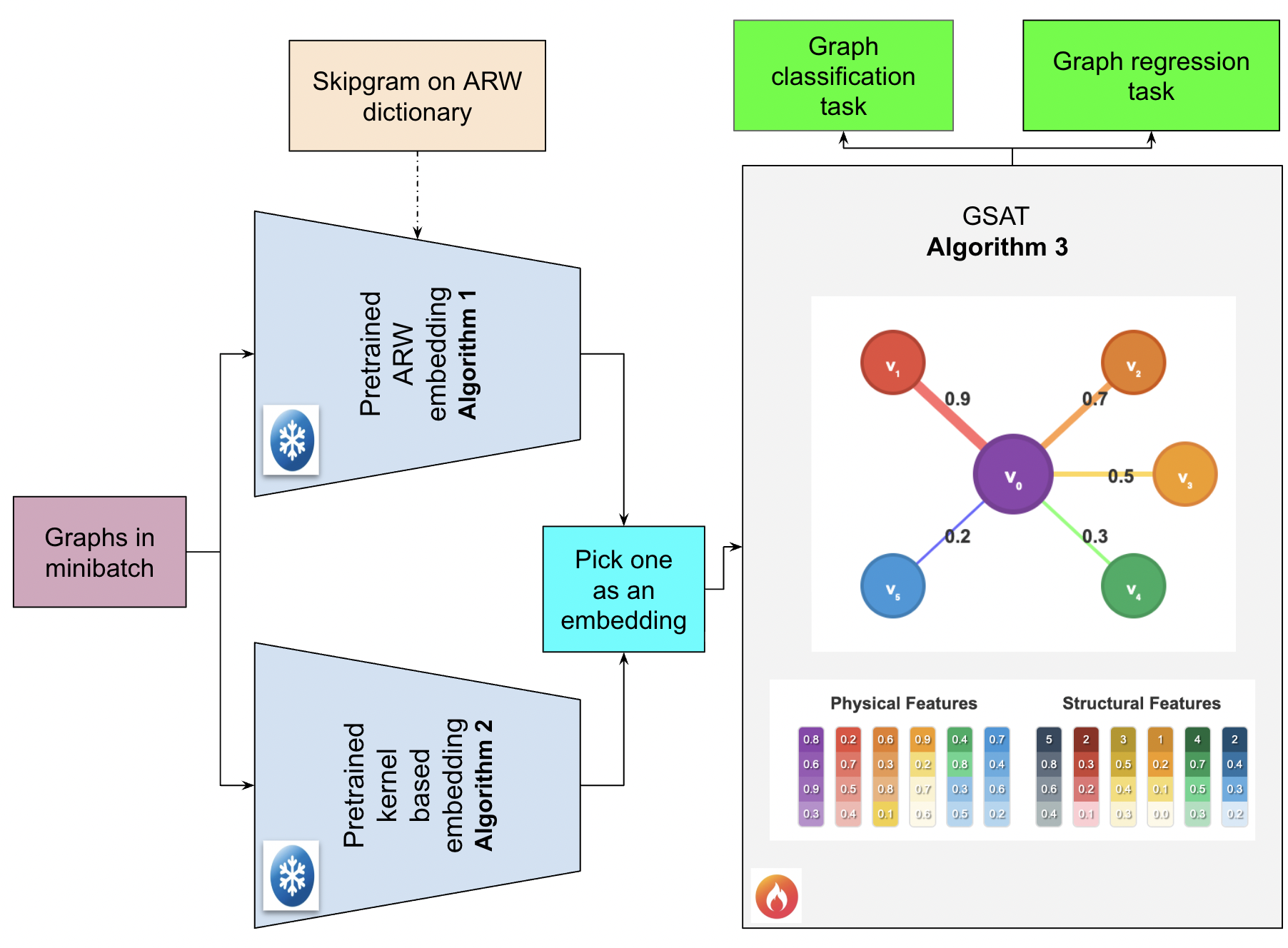}}
    \caption{The proposed framework to model structural information for the downstream tasks.}
    \label{fig:framework}
\end{figure*}

\subsection{Preliminaries}
Given a graph $G = (V, E)$, we use $V$ and $E$ to denote its nodes and edges, respectively. The nodes are indexed by v and u
such that $v, u \in V$, and an edge connecting nodes v and u is denoted by $(v, u) \in E$. The connectivity is encoded in
the adjacency matrix $A \in R^{n\times n}$  where n is the number of nodes. p denotes the width (hidden dimension size), while $l$
is the number of layers. The feature of node v at layer $l$ is written as $h^{l+1}_{v}$. In GSAT, we prefer to have either one or very few layers since the problem of oversmoothing deteriorates the significance of node and graph representations.
\subsection{Anonymous Random Walk} 
ARW was originally introduced in \cite{SilvioMicali}. \cite{Ivanov2018} designed task-independent algorithms for learning graph representations based on ARW. ARW methods inherently require permutation invariance because they are designed to capture the structural essence of a graph without relying on specific node identities. In many domains such as social networks or molecular graphs, the exact identity of a node is arbitrary, and what truly matters is the role it plays within the structure, such as being a hub, bridge, or part of a cycle. ARW methods achieve this by anonymizing the walk, tracking only the sequence of structural positions (e.g., node degrees or relative positions) rather than node labels or IDs. This makes them especially effective in tasks where the graph’s topology and role-based patterns are more informative than the individual node attributes. GSAT will receive these frozen learned structural node representations as inputs. Here we give two definitions that are needed to create ARW.
\begin{definition}[\cite{Ivanov2018}]
Let $s = (u_{1}, u_{2}, \ldots, u_{k})$ be an ordered list of elements with $u_i \in V$. 
We define the positional function $\textbf{pos}(s, u_i) \to q$ such that, for any ordered list $s = (u_{1}, u_{2}, \ldots, u_{k})$ and an element $u_i \in V$, it returns a list $q = (p_{1}, p_{2}, \ldots, p_{l})$ of all positions $p_j \in \mathbb{N}$ at which $u_i$ occurs in the list $s$.
\end{definition}
\begin{definition}\cite{Ivanov2018}
If $w = (v_{1}, v_{2}, \ldots , v_{k})$ is a random walk, then its corresponding anonymous walk is the sequence of integers $a = (f(v_{1}), f(v_{2}),\ldots, f(v_{k}))$, where integer $f(v_{i}) = \textbf{min} \ \textbf{pos}(w, v_{i})$.
\end{definition}
The aim is to maximize the following average log probability:
\begin{equation}
\frac{1}{T} \sum_{t=\delta}^{t=T-\delta} \log p (w_{t}|w_{t-\delta},\ldots,w_{t+\delta},d) 
\end{equation}
where the graph corresponds to d and $\delta$ is the window size, i.e. number of context words for each target word. The above probability is defined via the following softmax function:
\begin{equation}\label{eq:scoring}
p(w_{t}|w_{t-\delta},\ldots,w_{t+\delta},d) = \frac{e^{y(w_{t})}}{\sum_{i=1}^{\eta}e^{y(w_{i})}}
\end{equation}
where $\eta$ is the number of ARWs of length $l$. Equation~\ref{eq:scoring} is a softmax and $y(w_{i})$ is the scoring function for the candidate walk $w_{i}$.Note that the present work simply uses skip-gram to represent each ARW and is limited to homogeneous graphs.
\section{Methodology}
Given a graph and its node attributes, the motivation is to use the structural representation(SR) to learn how to guide the message passing of original attributes(OA) in a data driven way. Note that combinatorially each individual node may be surrounded by many diverse substructures and the model is learning how to provide a gating mechanism to control the message passing based on local structures since encoding different substructures around each node provides better generalization and expressivity of the model. 
We define latent structure representation(LSR) as the representation of each node such that implicit local structures could be represented as a vector and GSAT provides such message passing algorithm. 
\subsection{Preprocessing}
We use two different ways to model structural information namely ARW method and graph kernel method. Section~\ref{sec:RW} describes how to train a skipgram model to obtain structural information around each node. Section~\ref{sec:filters} explains how filters are learned and structural node representations are achieved using this graph kernel method. 
While GKNN might capture specific substructures like rings and chains with labels (atoms), ARW is useful when structure matters more than atom identity when discovering common structural roles or motifs are significant. 

\subsubsection{Anonymous Random Walks}\label{sec:RW}
RWs (such as those used in Node2Vec \cite{AdityaGrover2016} or DeepWalk\cite{Perozzi2014}) can be interpreted similarly to SkipGram in the sense that nodes in the graph are treated as words, and the RW serves as the context that SkipGram attempts to predict. Just as SkipGram learns the relationship between a target word and its context words, graph-based random walk methods learn the relationships between nodes and their neighbors, encoding the local graph structure into meaningful node embeddings. Thus, RWs in graph learning methods serve a role analogous to context windows in SkipGram, both helping to capture local structure for effective representation learning.

\par
Skipgram is used in graph embedding methods to learn meaningful node representations. Skip-gram, originally used in word embeddings (e.g., Word2Vec \cite{TomasMikolov2013}), learns vector representations by maximizing the likelihood of predicting context nodes given a target node. In the context of graph embedding, anonymous random walks generate sequences of node visits that capture structural properties of the graph without considering specific node identities. These sequences serve as input to the skip-gram model, treating sequences of RW similarly to words in a sentence. By optimizing the skip-gram objective on these walks, the model learns embeddings that capture local and global structural relationships in the graph.
We use skipGram which is a fast Word2Vec algorithm \cite{TomasMikolov2013}. The ARW could be seen as a word in a sentence since there are many ARW starting from a fixed node. Preprocessing is done to calculate word embedding through skipGram algorithm. The overall algorithm is shown in Algorithm~\ref{alg:arw}. Just like in NLP, each unique pattern $w \in\mathcal{A}$ is treated like a word in a vocabulary. The mapping $\phi$ is a lookup table (implemented as an embedding matrix), initialized randomly and learned through Skip-gram training. Note that this algorithm is fully unsupervised and no knowledge of graph labels is needed. A simple step by step example is shown in appendix~\ref{ap:simple} to show how walks are mapped and embedded. 

\begin{algorithm}
\small
\caption{Anonymous Walk-Based Structural Node Embedding with Skip-Gram}
\label{alg:arw}
\begin{algorithmic}[1]
\REQUIRE Graph $G = (V, E)$, walk length $l$, number of walks per node $r$, context window size $w$, embedding dimension $d$
\ENSURE Structural node embeddings $Z \in \mathbb{R}^{|V| \times d}$
\STATE Initialize embedding table $\Phi: \mathcal{A} \rightarrow \mathbb{R}^d$ for all anonymous walk patterns $\mathcal{A}$
\STATE Initialize empty multiset of training pairs $\mathcal{T}$
\FOR{each node $v \in V$}
    \STATE Sample $r$ random walks of length $l$ starting from $v$
    \STATE Convert each walk to its anonymous walk representation
    \STATE Store resulting walk sequence as $S_v = [w_1, w_2, \dots, w_r]$
    \FOR{each position $i$ in $S_v$}
        \FOR{each $j \in [i - w, i + w],\ j \ne i$ and $j$ in bounds}
            \STATE Add pair $(w_i, w_j)$ to training set $\mathcal{T}$
        \ENDFOR
    \ENDFOR
\ENDFOR
\STATE \texttt{// Skip-gram training}
\FOR{each training pair $(w_i, w_j) \in \mathcal{T}$}
    \STATE Sample negative walks $\{w_k\}$ from $\mathcal{A}$ uniformly
    \STATE Update $\Phi$ using gradient step on:
    \[
        \log \sigma(\Phi(w_i)^\top \Phi(w_j)) + \sum_{w_k \in \mathcal{N}} \log \sigma(-\Phi(w_i)^\top \Phi(w_k))
    \]
\ENDFOR
\STATE \texttt{// Node-level aggregation}
\FOR{each node $v \in V$}
    \STATE Let $S_v = [w_1, w_2, \dots, w_r]$ be anonymous walks from $v$
    \STATE Compute embedding: \quad $Z_v \gets \frac{1}{r} \sum_{i=1}^r \Phi(w_i)$
\ENDFOR
\RETURN $Z$
\end{algorithmic}
\end{algorithm}
The length of the walk can vary, and multiple walks are often sampled to capture diverse structural patterns within the graph. By sampling a series of RWs, it is possible to explore local neighborhoods of nodes, which can then be used in learning meaningful embeddings or representations of the graph's structure. To obtain LSR of each node, an ARW is drawn randomly.

\begin{algorithm}
\small
\caption{Learning structural masks with parameters $\theta$, and the resulting node structural embedding}
\label{algo:GK}
\begin{algorithmic}[1]
\REQUIRE ${\mathcal{G}_{1},\mathcal{G}_{i},\ldots, \mathcal{G}_{N}}$ , $y_{i}\in \mathbb{R}$, \text{proto size}, $m$
\ENSURE The structural masks $\{M_{1},M_{2},\ldots,M_{m}\}$ are learnable
\FOR{$i = 1$ to $N$}
\FOR{$j = 1$ to $m$}
\STATE $z_{j}^{i}(v) \gets \mathbb{E}_{G \sim M_{j}}K(G, \mathcal{N}^{r}_{\mathcal{G}_{i}}(v))$ \hfill   \textit{for j-th structural mask}
\ENDFOR
\STATE $F^{i}(v) \gets f_\phi(z^{i}(v))$ \hfill \textit{Apply MLP to each node representation}
\STATE $F_{\mathcal{G}_{i}} \gets \frac{1}{|V|} \sum_{v \in V} F^{i}(v)$ \hfill \textit{Mean pooling over nodes}
\ENDFOR
\STATE  $\mathcal{L}_{\text{tot}} = \mathcal{L}_{\text{MSE}} + \mathcal{L}_{\text{JSD}}$  \hfill \textit{total loss}
\RETURN $\theta, \phi, M_{i}, z$ 
\end{algorithmic}
\end{algorithm}

 The mean vector of all word embeddings of an ARW started at node $v$ will be the LSR in of that node in GSAT and we call it $h_{v}^{(s)}$. 
\subsubsection{Learning Filters}\label{sec:filters}. The second method that we use to model structural information efficiently is graph kernel which models the similarity of the two subgraphs around two nodes, and learns the filters similar to \cite{Cosmo2024}.
\cite{Cosmo2024} leverages crossentropy loss to be able to backpropagate the gradient and learn the filters, but we use mean squared error(MSE) loss instead, since graph property values are available for datasets like QM9. The structural embedding is as follows:
Given a graph $G = (V, X, E)$ with $|V|$ nodes, we extract $|V|$ subgraphs $\mathcal{N}^r_G(v)$ of radius $r$, each centered at a node $v \in V$. Each subgraph includes the central node $v$, all nodes within distance $r$, and the edges among them. These subgraphs are compared to a set of learnable structural masks $ \{M_1, \dots, M_m\} $ via a kernel function. \begin{equation}
z_j(v) = \mathcal{K}(M_j, \mathcal{N}^r_G(v))
\end{equation}
where $z(v) \in \mathbb{R}^m_{\geq 0}$ is a non-negative, real-valued vector containing the kernel responses for each mask.
Graph kernels, which operate on discrete graph structures, are generally non-differentiable. To enable optimization of the structural masks, \cite{Cosmo2024} relaxes them into distributions over graphs and computes the expected kernel value under these distributions.
Each structural mask $M_i = (V_{M_i}, X_{M_i}, E_{M_i})$ is a \textit{first-order stochastic graph}, defined as a probability distribution over graphs. Each edge $e \in E_{M_i}$ is sampled independently from a Bernoulli distribution with parameter $\theta_e$ and each node label
$x \in X_{M_i}$ is sampled from a discrete probability distribution over a dictionary $\mathcal{D}$, with parameters 
$\phi^x_d$ denoting the probability that $x$ takes value $d \in \mathcal{D}$. The parameters $\theta$ (for edges) and $\phi$ (for node labels) are the learnable components of the Graph Kernel Convolution (GKC) layer. Note that we used Gumbel-Softmax trick to give a differentiable approximation to sampling from a categorical distribution. After taking expectation from the distribution we would have:
\begin{equation}\label{eq:struct-embedding}
z_{j}(v)=\mathbb{E}_{G \sim M_{j}} \mathcal{K}(G, \mathcal{N}^{r}_{\mathcal{G}}(v))
\end{equation}
$\mathcal{N}^{r}_{\mathcal{G}}(v)$ in equation~\ref{eq:struct-embedding} is the r-hop neighbourhood of node $v$ and $M_{j}$ is the j-th filter that should be learned. Thus, if we have $m$ filters(structural masks) like $\{M_{1},\ldots,M_{m}\}$, then the size of vector $z$ will be $m$. 
The full training algorithm is shown in Algorithm~\ref{algo:GK}. After calculating node structural embedding, we give it to a multilayer perceptron (MLP) and calculate a mean pooling to obtain the final graph representation. The following MSE loss is used for regression:
\begin{equation} \label{eq:MSEloss}
\mathcal{L}_{\text{MSE}} = \mathbb{E}_{\theta} \frac{1}{n} \sum_{i=1}^{N} (y_i - F_{\mathcal{G}_{i}})^2
\end{equation}
Since we can approximate probability distribution induced by the $j$-th mask over the graph nodes as 
$P_{j} = \{\frac{z_{j}(v)}{\sum_{v'}z_{j}(v')}|v,v' \in V \}$. With the same spirit as \cite{LuBai2020},\cite{Englesson2021}, \cite{LucaCosmo2025}, we maximize the Jensen–Shannon divergence (JSD) to force the learned distributions to be as maximally distant from one another to create more unique structural masks:
\begin{equation}\label{eq:JSD}
\mathcal{L}_{JSD} = -H(\sum_{j=1}^{m}P_{j})+\sum_{j=1}^{m}H(P_{j})
\end{equation}
where $H(P)$ in equation~\ref{eq:JSD} is the Shannon entropy.
Thus, the total loss that is needed to learn the parameters of the structural masks is as follows:
\begin{equation}
\mathcal{L}_{\text{tot}} = \mathcal{L}_{\text{MSE}} + \mathcal{L}_{\text{JSD}}
\end{equation}

Figures~\ref{fig:5nodefilters},\ref{fig:6nodefilters} show some of the learned filters for masks having five nodes and six nodes respectively. They are obtained after training using algorithm~\ref{algo:GK} on QM9 dataset. Similarly, some of the learned filters for ZINC dataset are shown in figure~\ref{fig:zincfilters}. These learned filters produce embeddings that can be used at inference time to predict the properties of new unseen graphs. 

\subsection{Graph Structure Attention Network}
In GSAT, structural embeddings are used to inform the attention mechanism to automatically discern which neighbors are more likely to contribute meaningfully to a node’s updated representation based on their structural proximity in the graph or any other implicit guidance of structural embeddings. Note that GSAT uses ARW and not the original RW. Thus, structural encoding is more effective. In a GSAT, these ARW embeddings in algorithm~\ref{alg:arw} or the expected kernel embeddings in algorithm~\ref{algo:GK} are used to create the attention mechanism to automatically figure out which neighbors are more likely to contribute meaningfully to a node’s updated representation based on their structural proximity or any other implicit justification in the graph. This inductive bias guides the message passing of the nodes with original features.
\par
We decouple the feature vector into two different parts namely, structural attributes $h_{u}^{(s)}$ and original attributes $h_{u}^{(orig)}$. ARW embedding in algorithm~\ref{alg:arw} or expected kernel embeddings in algorithm~\ref{algo:GK} are examples of structural attributes that are used in GSAT which have superscript s in our terminology. The present framework allows any other type of structural attributes to be used for $h_{u}^{(s)}$. 
Like other GNNs, deep GATs suffer from over-smoothing, where repeated message passing causes node representations to become indistinguishable, reducing the model’s expressiveness. One cornerstone for the success of GAT is the fact that unlike GCNs, which use fixed-weight averaging, GATs assign different importance (attention scores) to each neighbor, allowing more influential nodes to contribute more to the final representation. Although we draw inspiration from graph attention network(GAT)\cite{Velickovic2018} , the attention weights are not based on original attributes and is only calculated using the $h_{v}^{(s)}$. Thus, the aggregated messages at layer k are:
\begin{equation}
m_{N(u)}^{(k)}=\sigma(\sum_{v \in \mathcal{N}(u)} \alpha_{u,v}Wh_{v}^{(orig)})
\end{equation}
where the attention weights are as follows:
\begin{equation}
\alpha_{u,v} = \frac{\exp (Relu(a^{T}[Wh_{u}^{(s)}||Wh_{v}^{(s)}]))}{\sum_{v'\in \mathcal{N}_{u}} \exp (Relu(a^{T}[Wh_{u}^{(s)}||Wh_{v'}^{(s)}])) }
\end{equation}
Finally, the nodes are updated using the following combine rule:
\begin{equation}
h_{u}^{(k+1)} = ReLU(V^{(k)}m_{\mathcal{N}(u)}^{(k)}+b^{(k)})
\end{equation}
where $V^{(k)}$ denotes a trainable weight matrix and $b^{(k)}$ is bias term. Although the present work only uses $h_{u}^{(s)}$ to calculate the attention weights, any other combination(concatenation or summation) of structural and original nodes is also possible. Attention mechanisms can be noisy or overly focused on certain parts of the input. The outputs of multiple heads are averaged which leads to a more robust representation. Thus, GSAT is implemented based on multiheaded attention with similar spirit to the original multiheaded GAT. The full algorithm is shown in Algorithm~\ref{alg:gsat}. 
\begin{algorithm}
\small
\caption{Training a Graph Structure Attention Network (GSAT)}
\label{alg:gsat}
\begin{algorithmic}[1]
\REQUIRE Graph $G = (V, E)$, feature matrix $X \in \mathbb{R}^{|V| \times d_{\text{in}}}$, labels $Y$, number of heads $H$, learning rate $\eta$, number of epochs $T$
\ENSURE Trained parameters of GSAT
\IF{method is \textbf{ARW} embedding}
    \STATE store node structural representation of Algorithm~\ref{alg:arw} in $h^{(s)}_{v}$
\ELSIF{method is \textbf{GKNN} embedding}
    \STATE store node structural representation of Algorithm~\ref{algo:GK} in $h^{(s)}_{v}$
\ENDIF
\STATE Initialize attention weights $\{W^{(h)}, a^{(h)}\}_{h=1}^H$ for each head
\FOR{$t = 1$ to $T$}
   
\FOR{$l = 1$ to $L$}
    \STATE $H^{(l)} \leftarrow \text{GSATLayer}(H^{(l-1)}, A)$
\ENDFOR

\STATE $H \leftarrow H^{(L)}$ \COMMENT{Final node embeddings}

\STATE $z \leftarrow \text{GraphPooling}(H)$ \COMMENT{Global pooling: mean / sum / attention}

\IF{$\texttt{task} = \texttt{regression}$}
    \STATE $\hat{y} \leftarrow \text{MLP}_{\text{reg}}(z)$
    \STATE $\mathcal{L} \leftarrow \text{MSELoss}(\hat{y}, y)$ \COMMENT{Mean Squared Error Loss}
\ELSIF{$\texttt{task} = \texttt{classification}$}
    \STATE $\hat{y} \leftarrow \text{softmax}(\text{MLP}_{\text{cls}}(z))$
    \STATE $\mathcal{L} \leftarrow \text{CrossEntropyLoss}(\hat{y}, y)$
\ENDIF

\STATE Backpropagate gradients of $\mathcal{L}$ w.r.t. $\{W^{(h)}, a^{(h)}\}$
    \STATE Update parameters using gradient descent:
    \[
    W^{(h)} \leftarrow W^{(h)} - \eta \cdot \nabla_{W^{(h)}} \mathcal{L}, \quad
    a^{(h)} \leftarrow a^{(h)} - \eta \cdot \nabla_{a^{(h)}} \mathcal{L}
    \]
    \STATE $H^{(0)} \leftarrow X$
\ENDFOR
\RETURN $\{W^{(h)}, a^{(h)}\}_{h=1}^H$
\end{algorithmic}
\end{algorithm}
SeMole is still one of the best baselines for property prediction of QM9 dataset and we compared GSAT with it as is shown in table~\ref{tab:qm9_results}. 
\begin{table*}[h]
\centering
\caption{Mean Absolute Error (MAE) for the molecular property prediction benchmark in the QM9 dataset. Units are shown below each property.}
\label{tab:qm9_results}
\renewcommand{\arraystretch}{1.1}
\setlength{\tabcolsep}{5pt}
\begin{tabular}{lcccccccccc}
\hline
\textbf{Model} 
& \begin{tabular}{@{}c@{}}$\mu$\\(D)\end{tabular}
& \begin{tabular}{@{}c@{}}$\alpha$\\(Bohr$^3$)\end{tabular}
& \begin{tabular}{@{}c@{}}$\epsilon_{\text{HOMO}}$\\(meV)\end{tabular}
& \begin{tabular}{@{}c@{}}$\epsilon_{\text{LUMO}}$\\(meV)\end{tabular}
& \begin{tabular}{@{}c@{}}$\Delta\epsilon$\\(meV)\end{tabular}
& \begin{tabular}{@{}c@{}}$\langle R^2 \rangle$\\(Bohr$^2$)\end{tabular}
& \begin{tabular}{@{}c@{}}ZPVE\\(meV)\end{tabular}
& \begin{tabular}{@{}c@{}}$U_0$\\(meV)\end{tabular}
& \begin{tabular}{@{}c@{}}$U$\\(meV)\end{tabular}
& \begin{tabular}{@{}c@{}}$C_v$\\(cal/mol$\cdot$K)\end{tabular} \\
\hline
\textbf{SeMole}$_{\textbf{Pretrained}}$     & 0.204 & 0.317 & 15 & 12 & 34 & 0.752 & 3.23 & 17 & 19 & 0.163 \\
\textbf{GSAT} & \textbf{0.162} & \textbf{0.192} & \textbf{11} & \textbf{9} & \textbf{29} & \textbf{0.461} & \textbf{2.14} & \textbf{14} & \textbf{12} & \textbf{0.109} \\
\hline
\end{tabular}
\end{table*}

\subsection{Computational Complexity}
To calculate the computational complexity of GSAT we break it into two parts namely attention calculation for message passing and the hierarchical pooling based on edgePool \cite{Diehl2019} which is . Assume the structural size has dimension F and H be the number of attention heads , E be the number of edges and N is the number of nodes. Then GSAT has computational complexity of $O(HEF)+O(NlogN)$. 
\begin{table*}[t]
\centering
\caption{Graph classification accuracies (mean ± std, \%). Results are averaged over 10 runs. \textbf{Bold} indicates the best, \underline{underline} indicates the second best.}
\label{tab:graph_classification_results}
\small
\setlength{\tabcolsep}{3pt}
\renewcommand{\arraystretch}{1.05}
\begin{tabular}{@{}lcccc@{}}
\hline
\textbf{Model} & \textbf{MUTAG} & \textbf{Prot.} & \textbf{DD} & \textbf{NCI1} \\
\hline
TopKPool \cite{HongyangGao2019} & 67.61±3.36 & 70.48±1.01 & 73.63±0.55 & 67.02±2.25 \\
ASAP \cite{EashanRanjan2020}    & 77.83±1.49 & 73.92±0.63 & 76.58±1.04 & 71.48±0.42 \\
SAGPool \cite{JunhyunLee2019}   & 73.67±4.28 & 71.56±1.49 & 74.72±0.82 & 67.45±1.11 \\
DiffPool \cite{RexYing2018}     & \underline{79.22±1.02} & 73.03±1.00 & \underline{77.56±0.41} & 62.32±1.90 \\
MinCutPool \cite{Bianchi2020}   & 79.17±1.64 & \textbf{74.72±0.48} & \textbf{78.22±0.54} & \underline{74.25±0.86} \\
GSAT-hp (ours)                 & \textbf{86.33±0.55} & \underline{74.29±0.76} & 77.35±1.52 & \textbf{75.12±1.17} \\
\hline
\end{tabular}
\end{table*}

\begin{table}[h]
\centering
\renewcommand{\arraystretch}{1.2} 
\caption{Mean Absolute Error(MAE) for the molecular property prediction in the ZINC dataset.}
\label{tab:ZINC_comparison}
\begin{tabular}{c c c c}
\hline
\textbf{Model} & \textbf{Penalized logP}  & \textbf{QED} & \textbf{MolWt}  \\
\hline
\textbf{SeMole}$_{\textbf{Pretrained}}$ & 1.34 &  0.162 & \textbf{1.78} \\
\textbf{GSAT} & \textbf{1.29} & \textbf{0.083 } & 2.19 \\
\hline
\end{tabular}
\end{table}

\begin{table}[t]
\centering
\caption{Effect of hyperparameters on Protein graph classification accuracy (mean ± std, \%). Results are averaged over 10 runs. \textbf{Bold} indicates best, \underline{underline} indicates second best.}
\label{tab:sensitivity_results}
\footnotesize 
\setlength{\tabcolsep}{3pt} 
\renewcommand{\arraystretch}{0.8} 
\begin{tabular}{c c c c c}
\hline
\textbf{Cfg} & \texttt{struct\_size} & \texttt{walk\_len} & \texttt{numRW/node} & \textbf{Acc. (\%)} \\
\hline
1 & 10   & 10  & 30  & 71.41 ± 0.91 \\
2 & 10   & 20  & 30  & 71.15 ± 0.69 \\
3 & 50   & 10  & 30  & \underline{74.29 ± 0.57} \\
4 & 50   & 20  & 30  & \textbf{74.92 ± 0.59} \\
5 & 100  & 20  & 30  & 73.74 ± 0.36 \\
6 & 100  & 20  & 60  & 73.51 ± 0.84 \\
7 & 200  & 20  & 60  & 71.27 ± 1.13 \\
8 & 400  & 20  & 60  & 70.83 ± 0.74 \\
\hline
\end{tabular}
\end{table}

\section{Experiments}
Note that in all experiments in the present work, we do not concatenate the structural features with original features. However, the concatenation in equation \eqref{eq-concat} could be considered as a more general formulation which provides a framework for future works. There are two main tasks in the present work. The first task is graph classification on MUTAG, PROTEINS, DD and NCI1 and we used ARW for structural embedding of nodes. The second one is graph property prediction which is a regression task on QM9 and ZINC dataset, and the kernel based embedding is used for structural representation of each nodes.
The hyperparameters used in our experiments is optimized by the values in Table~\ref{tab:hyperparameters}. The learning rate is started at $10^{-3}$ but is gradually reduced by 90 percent every 20 epochs. Five heads are used since a single headed attention produced very noisy results with high variance for the performance. There are other hyperparameters that are mentioned in Table~\ref{tab:sensitivity_results}. As Table~\ref{tab:comparativeStudy_results} shows, the hierarchical pooling version (GSAT-hp) produced better results than the global pooling version(GSAT-gp) as expected since the mean pooling simply eliminate the information provided by the graph topology which is essential for efficient graph classification.Table~\ref{tab:graph_classification_results} shows how GSAT slightly outperforms performance for NCI1 dataset. 
\subsection{Dataset}
MUTAG, PROTEINS, DD, and NCI1 are widely used datasets for graph classification, particularly in bioinformatics and cheminformatics \cite{Yanardag2015}. MUTAG consists of molecular graphs where nodes represent atoms and edges represent chemical bonds, with the task of classifying compounds based on their mutagenic properties. PROTEINS contains protein structure graphs, where nodes correspond to secondary structure elements, and edges represent interactions, aiming to classify proteins into functional categories. DD (Drosophila Development) is a larger and more complex protein dataset, making it useful for evaluating models on diverse biological structures. NCI1, derived from the National Cancer Institute, consists of molecular graphs used to predict anti-cancer activity. Their statistics are shown in Table~\ref{tab:dataset_statistics}.
\subsection{Ablation Study for walk length}
Choosing a walk length for each graph dataset distribution is crucial, particularly in protein graph datasets, where capturing motifs and high-order structures significantly impacts model performance. A carefully chosen walk length helps in effectively capturing these motifs and short walks may primarily encode local residue interactions, while longer walks can reveal higher-order structural patterns. If the walk length is too short, the model may fail to recognize essential long-range dependencies critical for functional characterization. Conversely, excessively long walks may introduce noise by aggregating distant, functionally irrelevant nodes, diluting meaningful structural signals. Therefore, designing the walk length in alignment with the inherent structural properties of the dataset, ensures that graph learning models can accurately capture biologically relevant patterns, while minimizing unnecessary information propagation.
\par
The walk length in random walks influences graph classification in multiple ways, particularly for the PROTEIN and MUTAG datasets as are shown in Table~\ref{tab:ablation_walklength_combined}. Here we explain how different walk lengths impact classification performance. The small walk length(short walks) captures local neighborhood structure which preserves fine-grained structural details and is useful for distinguishing proteins based on small functional motifs but it fails to capture global graph connectivity and global topology. The drawback of large walk length is that it may introduce noise if RW drifts too far from meaningful substructures. On the other hand, it captures the overall graph connectivity and large scale properties but it dilutes the importance of local motifs which are critical for graph classification.
\subsection{Sensitivity Analysis}
The effect of ARW is rooted in three main hyperparameters. The first one is \texttt{structural\_size}, which is the size of structural features that \texttt{skipGram} has been trained on. The second parameter is \texttt{walk\_length}, which is the number of walks starting from each node. Finally, \texttt{num\_RW\_per\_node} is the number of RW that has been done. Note that this parameter is directly related to the corpus size when training the \texttt{skipGram} model. 
Here we study the sensitivity of these parameters on the final graph classification performance. From a qualitative point of view, when \texttt{structural\_size} increases, more structural information around each node is represented and therefore we expect that the performance would be increased. However this increase in performance is limited by computational resource limitations since the attention weights should be calculated from these high dimensional features for all nodes. Similarly, increasing \texttt{walk\_length} can capture local neighbourhood at higher radius and may include bottlenecks in the graphs that are responsible for oversquashing. Increasing \texttt{num\_RW\_per\_node} may reduce the noise of structural modeling and produces a robust representation of structure since nodes with high centrality will be implicitly captured by increasing this hyperparameter. The learned filters in QM9 dataset are shown in Appendix~\ref{ap:QM9dataset}. To evaulate GSAT for ZINC dataset, we compared it with SetMole which is a baseline introduced in \cite{AtiaHamidizadeh2023}. Our comparison is shown in table~\ref{tab:ZINC_comparison}
\section{Conclusion}
We have introduced a novel method called GSAT which could be seen as a generalization of GAT. GSAT leverages the structural embeddings of nodes to guide the attention in message passing to learn to automatically manage the edge strengths in the message passings that are guided by structural information of individual nodes. In the present work we first created pretrained models for structural embedding based on ARW and kernel embedding. These frozen pretrained models are then fed to GSAT as structural features and the model learns how to attend to individual neighbours of nodes in the process of message passing of original features. The experiments are done on graph classification and graph property prediction of some benchmarks.
\clearpage
\bibliographystyle{named}
\bibliography{ijcai26}
\appendix
\section{Physical and structural features in GSAT}
Table~\ref{table:nodefeaturesQM9} shows example of physical features that could be used in GSAT for message passing. Note that the weights for the contribution of each neighbor node to the central node is obtained by computing attention over corresponding structural features. Similarly, the physical node features of ZINC dataset is shown in in Table~\ref{table:zincfeatures} and is different from structural features that are obtained by Algorithm~\ref{alg:arw},\ref{algo:GK}. The attention computation over structural features are obtained in Algorithm~\ref{alg:gsat}. Adding motif counts as features is easy but these handcrafted features are not adequate to model complex structures in a graph. The network is no longer learning motif detection from scratch and it is handled in a precomputed form, which defeats the purpose if the goal is for the model to learn structure autonomously and efficiently. 
\par
Spectral graph theory is another paradigm to model structural features. Eigenvectors are tied to graph size and structure while models trained on one size may not transfer directly to another. The common approach to model structures is adding Laplacian eigenvectors or shortest-path encodings to node input features, but these methods still are very global and can not capture local and semi-global substructures inside a graph. Computing Laplacian eigenvectors is $O(n^3)$  in the worst case (though sparse methods are faster) and shortest-path distances are $O(n^2)$  for unweighted graphs. Laplacian or shortest-path information might still miss certain structural differences detectable only by higher-order methods. 
\section{Kernel Computation}
Kernel computation can be costly $O(N^{2})$ when the filter size is big, since $N$ is the number of rows in the graph filters. Small $N$ captures local substructures while a big $N$ can model semi-global and global substructures. Sampling-based ARWs are relatively efficient.
\par
The Weisfeiler-Lehman (WL) kernel~\cite{shervashidze2011weisfeiler} is a widely used graph kernel that refines node labels by iteratively aggregating information from their neighborhoods. Starting from the original node labels, this process incorporates both categorical features and structural patterns, making it particularly effective for labeled graphs in QM9 and ZINC datasets where node identities (e.g., atom types in molecules) are semantically meaningful. By combining label information with multi-hop structural context, the WL kernel provides strong discriminative power and allows us to compare the similarity between learnable filters and the subgraphs around each node. The label information of nodes that we used is atom types and other features in QM9 and ZINC datasets.
Let $\mathcal{G}$ be a set of graphs with initial node labels  $l^{(0)} : V \to \Sigma$.  
At each iteration $h \geq 1$, node labels are updated by aggregating 
neighborhood labels:
\begin{equation}
l^{(h)}(v) = \text{hash}\!\left( l^{(h-1)}(v), \{\, l^{(h-1)}(u) \mid u \in \mathcal{N}(v) \,\} \right)
\end{equation}
where $\mathcal{N}(v)$ denotes the neighbors of node $v$.  
The Weisfeiler--Lehman kernel between two graphs $G$ and $H$ is then defined as
\begin{equation}
k_{\text{WL}}(G,H) = \sum_{h=0}^{H} k\big( \phi^{(h)}(G), \phi^{(h)}(H) \big)
\end{equation}
where $\phi^{(h)}(G)$ is the feature vector counting node labels 
after $h$ iterations, and $k(\cdot,\cdot)$ is a base kernel (e.g., linear kernel on histograms).

\section{Dataset}
Table~\ref{table:nodefeaturesQM9} summarizes the per-atom node features used to represent molecular graphs in the QM9 dataset. Each atom is encoded as a fixed-dimensional feature vector that captures both its elemental identity and key chemical properties derived from RDKit, such as valence, hydrogen count, and aromaticity. These features provide a chemically meaningful and standardized input representation for graph-based generative and predictive models operating on QM9 molecules.
Note that in our experiments we only used one hot atom types.
\begin{table}[h]
\centering
\caption{Node features in QM9 dataset (per atom).}
\label{table:nodefeaturesQM9}
\begin{tabular}{p{0.12\linewidth} p{0.78\linewidth}}
\hline
\textbf{Idx} & \textbf{Description} \\ \hline
0--4 & One-hot atom type: C, N, O, F, H \\
5    & Atomic number (e.g., 6 for C) \\
6    & Number of hydrogens attached \\
7    & Implicit valence (excluding hydrogens) \\
8    & Aromaticity flag (1 if aromatic) \\
9--10 & Additional RDKit-derived flags (e.g., formal charge, chirality, hybridization) \\
\hline
\end{tabular}
\end{table}

\begin{table*}[h]
\centering
\caption{Statistics of bioinformatics graph classification datasets.}
\label{tab:dataset_statistics}
\small
\renewcommand{\arraystretch}{1.2}
\scalebox{1.0}{
\begin{tabular}{c c c c c}
\hline
\textbf{Dataset}          & \textbf{MUTAG} & \textbf{Proteins} & \textbf{DD}   & \textbf{NCI1}  \\
\hline
\# Graphs                 & 188            & 1,113             & 1,178         & 4,110          \\
\# Classes                & 2              & 2                 & 2             & 2              \\
Avg. \# Nodes per Graph  & 17.9           & 39.1              & 284.3         & 29.8           \\
\hline
\end{tabular}
}
\end{table*}

\begin{table}[h]
\centering
\scriptsize
\caption{Statistics of QM9 and ZINC datasets}
\label{tab:dataset_statistics}
\renewcommand{\arraystretch}{1.1}
\begin{tabular}{c c c}
\hline
\textbf{Property} & \textbf{QM9} & \textbf{ZINC} \\
\hline
\# Molecules          & 133,885      & 250,000+ \\
Atom types            & 5 (H, C, N, O, F) & 9+ (H, C, N, O, S, Cl, etc.) \\
Max atoms             & 29           & $\sim$38 \\
Avg. atoms            & $\sim$18     & $\sim$24 \\
Targets               & 12+ physical props. & QED, logP, SA, etc. \\
\hline
\end{tabular}
\end{table}

\section{Comparison With Baselines}
For fair comparison and reasoning, we developed two version of GSAT namely the GSAT-hp and GSAT-gp which correspond to hierarchical and global pooling respectively.  Note that edgePool \cite{Diehl2019} is used to model hierarchical graph pooling in GSAT-hp. There is one advantage of using edgePool which is the fact that there is no requirement to set the number of clusters in advance and this allows the dataset to naturally find appropriate number of clusters in each pooling layer and respects the distribution of dataset. 
It also shows that performance for MUTAG dataset could be enhanced by 6 percent in comparison with MinCutPool which is a well recognized approach to hierarchical graph pooling \cite{Bianchi2020}. Some other important hierarchical pooling methods are  \cite{HongyangGao2019},\cite{EashanRanjan2020}, \cite{JunhyunLee2019}, \cite{RexYing2018}. 
\par
For the regression task on QM9 and ZINC datasets, we use SeMole which is a semi-supervised approach to junction tree variational autoencoder in \cite{Hamidizadeh2023} and is based on \cite{WengongJin2019}. SeMole sees a molecule as a sum of junction tree and  fine details. These substructures form a junction tree, where nodes are substructures and edges represent how they connect. It first generates the junction tree which is a coarse structure to ensure the overall scaffold is valid. Then it assembles the full molecular graph by adding atom-level details and specific bonds between the substructures. In our comparison, property prediction on QM9 and ZINC is done by learning a property regressor in the latent space. First a molecule is passed through both the tree encoder $z_{T}$ and graph encoder $z_{G}$. Then concatenated into a joint latent representation as follows:
\begin{equation}
z=z_{T}||z_{G}
\end{equation}
A small neural network $f_{\theta}$ which is an MLP is trained to map z into a predicted properties. Finally, variational autoencoder (VAE) reconstruction loss and property prediction loss are combined as follows:
\begin{equation}
\mathcal{L}_{\text{SeMole}} = \mathcal{L}_{VAE}+\lambda ||y-\hat{y}||^{2}
\end{equation}
Table~\ref{tab:comparativeStudy_results} illustrates comparative study of four models on four datasets. Global pooling (gp) is used in all cases for a fair comparison.
\begin{table*}[h]
\centering
\caption{Comparative study of four models on four datasets (mean ± std, \%). Results are averaged over 10 runs. \textbf{Bold} indicates the best, \underline{underline} indicates the second best.}
\label{tab:comparativeStudy_results}
\small
\renewcommand{\arraystretch}{1.2}
\scalebox{0.95}{
\begin{tabular}{c c c c c}
\hline
\textbf{Model} & \textbf{MUTAG} & \textbf{Proteins} & \textbf{DD} & \textbf{NCI1} \\
\hline
GCN-gp         & 80.61 ± 3.36        & 72.48 ± 1.01        & 73.63 ± 0.55       & 67.02 ± 2.25 \\
GAT-gp         & \underline{81.83 ± 1.49} & \underline{73.13 ± 0.63} & \underline{76.58 ± 1.04} & \underline{71.48 ± 0.42} \\
GIN-gp         & 81.67 ± 4.28        & 72.56 ± 1.49        & 71.72 ± 0.82       & 67.45 ± 1.11 \\
GSAT-gp (ours) & \textbf{82.29 ± 2.72} & \textbf{73.92 ± 0.41} & \textbf{76.92 ± 0.39} & \textbf{72.21 ± 0.43} \\
\hline
\end{tabular}
}
\end{table*}

\section{Theoretical Analysis}
The most relevant theoretical analysis to our work is SAGNN in \cite{DingyiZeng2023}. SAGNN creates cut subgraphs, which are derived from the original graph by selectively and iteratively removing edges. These subgraphs spotlight local structural variations that might be lost in full-graph aggregation. It computes the RW return probabilities which is the likelihood that a random walker starting at a root node in a cut subgraph returns to it within a certain number of steps. This effectively encodes rich structural context localized to that node and its substructure. Computing Edge Betweenness Centrality (EBC) for all edges in the original graph is expensive. Doing it iteratively means recomputing EBC after each edge removal, which multiplies the cost $O(kVE)$ where k is the number of removed edges. This step of \cite{DingyiZeng2023} is global graph processing, not localized like many GNN-friendly preprocessing steps, which makes it memory and time intensive for massive graphs. Another problem is that EBC can change drastically if a single high-centrality edge or node is altered. This makes the partitioning unstable since small noise in the graph structure can cause different blocks to be found.
\par
To address these issues, our method does not use EBC, and structural information is captured by either ARW or kernel embedding. Theorem~\ref{th:equivalence} shows that ARW and learning filters in kernel method are equivalent and one can pick one of them and the results and analysis are almost the same.
\begin{theorem}\label{th:equivalence}
Let $G$ be the space of subgraphs which includes the subgraphs around nodes of a graph and let $A_{l}$ denote the finite set of possible AWs of fixed length $l$.  Then there exists at least one anonymous walk kernel and the structural node embedding defined by it is a special case of GKNN node embedding.
\end{theorem}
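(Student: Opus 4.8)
The plan is to argue the statement constructively in two moves. First I would exhibit one concrete anonymous walk kernel and observe that plugging it into the GKNN response formula~\eqref{eq:struct-embedding} already yields a GKNN embedding; then I would establish the substantive content, namely that the mean-pooled Skip-gram ARW embedding of Algorithm~\ref{alg:arw} is itself recovered as such a GKNN embedding for an appropriate choice of masks and read-out. The second move is what makes the informal claim that ARW and the kernel method are equivalent precise.

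First I would fix the kernel. For a subgraph $H$ with a distinguished centre let $\psi(H)=(p_H(a))_{a\in A_l}\in\mathbb{R}^{|A_l|}$, where $p_H(a)$ is the probability that a length-$l$ walk from the centre of $H$ induces the anonymous pattern $a$; since $A_l$ is finite this is a well-defined finite-dimensional feature map, and I set
\begin{equation}
\mathcal{K}_{AW}(H_1,H_2)=\langle\psi(H_1),\psi(H_2)\rangle .
\end{equation}
As the linear kernel of an explicit feature map, $\mathcal{K}_{AW}$ is symmetric and positive semi-definite, hence a genuine kernel, which discharges the existence claim. Substituting $\mathcal{K}=\mathcal{K}_{AW}$ into~\eqref{eq:struct-embedding} gives $z_j(v)=\mathbb{E}_{G\sim M_j}\mathcal{K}_{AW}(G,\mathcal{N}^r_G(v))=\langle\psi(M_j),\psi(\mathcal{N}^r_G(v))\rangle$, so the embedding defined by the AW kernel is by construction an instance of the GKNN embedding.

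The core step is to match this against Algorithm~\ref{alg:arw}. By the law of large numbers the node average $Z_v=\tfrac{1}{r}\sum_i\Phi(w_i)$ converges, as $r\to\infty$, to $\sum_{a\in A_l}p_{\mathcal{N}^r_G(v)}(a)\,\Phi(a)=\Phi\,\psi(\mathcal{N}^r_G(v))$, where $\Phi\in\mathbb{R}^{d\times|A_l|}$ stacks the Skip-gram vectors as columns and $r\ge l$ guarantees the $r$-hop neighbourhood carries every length-$l$ walk from $v$; thus the ARW embedding is a fixed linear image of $\psi$. I would then pick $m=|A_l|$ masks $\{M_j\}$ whose feature vectors $\{\psi(M_j)\}$ are linearly independent, so the matrix $B$ with rows $\psi(M_j)^{\top}$ is invertible and $z(v)=B\,\psi(\mathcal{N}^r_G(v))$. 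Choosing the read-out $f_\phi$ of Algorithm~\ref{algo:GK} to be the single linear layer $f_\phi=\Phi B^{-1}$ (a degenerate MLP) then gives $F(v)=\Phi B^{-1}z(v)=\Phi\,\psi(\mathcal{N}^r_G(v))=Z_v$, so the Skip-gram ARW embedding is exactly the GKNN embedding for this kernel, these masks, and this MLP.

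The hard part will be guaranteeing that the first-order stochastic masks admit anonymous walk feature vectors spanning $\mathbb{R}^{|A_l|}$, or at least the subspace of AW distributions actually realised by $r$-hop neighbourhoods in the data. The cleanest route is to restrict to that realisable subspace, take the masks to be deterministic canonical subgraphs inducing distinct AW distributions, and read $B^{-1}$ as a pseudo-inverse there; the genericity argument that distinct masks give linearly independent distributions, together with the homogeneous-graph assumption that lets me drop the categorical node-label parameters $\phi$, is what needs the most care. Positive semi-definiteness, the coverage condition $r\ge l$, and the empirical-to-expectation passage are then routine.
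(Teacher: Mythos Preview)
Your first move is exactly the paper's proof: define the probability-vector feature map (the paper writes $\phi_{ARW}$ for your $\psi$), take the linear kernel on it, note positive semi-definiteness, and substitute into the GKNN response~\eqref{eq:struct-embedding}. The paper stops there, concluding that the anonymous walk kernel is ``a special instance of the GKNN's pluggable-kernel mechanism'' without ever connecting back to the Skip-gram output $Z_v$ of Algorithm~\ref{alg:arw}.

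Your second move therefore proves strictly more than the paper does. Recovering $Z_v$ itself as $f_\phi(z(v))$ for a suitable choice of masks and a linear read-out is the content that would actually justify the informal ``equivalence'' claim preceding the theorem, and the paper never attempts it. The outline is sound---the LLN reduction of $Z_v$ to $\Phi\psi$, the inversion via an invertible $B$, the choice $f_\phi=\Phi B^{-1}$---but two points deserve care. First, you overload $r$: in one breath it is the walk count sent to infinity, in the next it is the hop radius bounded below by $l$; the paper commits the same sin in its notation, but since you actually manipulate both quantities in the same sentence you should rename one. Second, the mask-realizability concern you flag (whether first-order stochastic masks can produce $|A_l|$ linearly independent AW distributions) is the only genuinely nontrivial ingredient in the whole argument, and your fallback to the realised subspace with a pseudo-inverse is the right pragmatic fix; the paper never confronts this because it never tries to match $Z_v$ at all.
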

\begin{proof}
For any subgraph $G_{i}$ around node $i$ define the feature map 
\begin{equation}
\phi_{ARW}(G_{i})= (\text{Pr}[\text{a occurs in} G_{i}])_{a \in \mathcal{A}_{l}} 
\end{equation}
mapping into $R^{|\mathcal{A}_{l}|}$ ,where $\mathcal{A}_{l}$ denotes the finite set of possible anonymous walks of fixed length $l$. 
We can define the Anonymous walk kernel $K_{AW}$ by:
\begin{equation}
K_{AW}(G_i,G_j) = \langle \phi_{ARW}(G_{i}),\phi_{ARW}(G_{j}) \rangle
\end{equation}
$K_{AW}$ is a valid positive semi‑definite graph kernel since $\phi_{ARW}(G_{i})$  maps graphs into a Euclidean feature space and $K_{AW}$  is by construction symmetric and positive semi‑definite. In GKNNs, each layer performs a convolution-like operation comparing the input graph to a set of structural masks (subgraphs or patterns), via a graph kernel function:
\begin{equation}
h(G_{i}) = [K(G_{i},M_{1}), K(G_{i},M_{2}), \ldots, K(G_{i},M_{m})] 
\end{equation}
for trainable masks $M_{i}$ and any valid graph kernel can be used.
Taking $K=K_{AW}$ as the kernel in the GKNN layer. Then each component becomes:
\begin{equation}
K_{AW}(G,M_{j}) = \langle \phi_{ARW}(G),\phi_{ARW}(M_{j}) \rangle
\end{equation}
This yields exactly the similarity between $G$ and each mask in the ARW feature space. Feeding this similarity vector into a linear network (or stacking layers) is formally equivalent to constructing a neural architecture whose core is built on ARW distributions. Note that for any fixed $l$ for the length of RW we can fix a specific dimension for each mask. Hence, the Anonymous Walk Kernel is fully equivalent to the kernel used in GKNN when that kernel is chosen to be AWK, situating AWK as a special instance of the GKNN’s pluggable-kernel mechanism.
\end{proof}
It is remarkable to observe that Theorem~\ref{th:equivalence} shows that ARW modeling is a special case of GKNN modeling to obtain the structural node embedding for any downstream task such as graph classification or graph regression. 
\begin{theorem}
Let $G = (V, E, X)$ be a graph with node set $V$, edge set $E$, and node feature matrix $X \in \mathbb{R}^{|V| \times d}$. A Graph Attention Network (GAT) with global pooling is used to generate a graph-level representation $h_G$ for classification. If the GAT does not incorporate structural graph information (ARW embedding), then there exist non-isomorphic graphs $G_1$ and $G_2$ such that:
\begin{equation}
G_1 \not\simeq G_2 \quad \text{but} \quad h_{G_1} = h_{G_2}
\end{equation}
leading to misclassification and poor generalization.
\end{theorem}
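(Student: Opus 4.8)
My plan is to prove the statement by exhibiting an explicit pair of non-isomorphic graphs on which a structure-blind GAT provably collapses to identical graph-level representations. The natural candidates are the two smallest graphs that the $1$-WL test fails to separate: let $G_1 = C_6$ be the $6$-cycle and $G_2 = C_3 \sqcup C_3$ be the disjoint union of two triangles. Both are $2$-regular on six vertices, so every node has identical degree and an identical local neighborhood profile, yet $G_1$ is connected with a single $6$-cycle while $G_2$ has two connected components, so $G_1 \not\simeq G_2$. The crux of the argument is that in the regime where the GAT ``does not incorporate structural graph information,'' the input node features carry no distinguishing signal, which I model by taking $X$ to be constant across all nodes of both graphs (the same constant vector $x_0$). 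The plan is to show, by induction on the layer index, that every node in $G_1$ and every node in $G_2$ carries the same representation at every layer, whence global pooling yields $h_{G_1} = h_{G_2}$.

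First I would establish the base case: at layer $0$ all nodes in both graphs share the common feature $x_0$. For the inductive step, suppose at layer $k$ every node in both graphs holds a common vector $c^{(k)}$. I would then evaluate the GAT attention coefficient $\alpha_{u,v} = \mathrm{softmax}_v\bigl(\mathrm{LeakyReLU}(a^{\top}[W h_u^{(k)} \,\|\, W h_v^{(k)}])\bigr)$ under this hypothesis. Since $h_u^{(k)} = h_v^{(k)} = c^{(k)}$ for every node and neighbor, every pre-softmax logit equals the same scalar, so the softmax over the neighborhood of $u$ distributes uniformly and
\begin{equation}
\alpha_{u,v} = \frac{1}{\deg(u)} = \frac{1}{2}
\end{equation}
for every edge in both graphs, because both graphs are $2$-regular. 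The aggregated message is therefore $\sigma\bigl(\tfrac{1}{2} W c^{(k)} + \tfrac{1}{2} W c^{(k)}\bigr) = \sigma(W c^{(k)})$, independent of the node and of which graph we are in, and the update rule produces a new common vector $c^{(k+1)}$. This closes the induction: after any number $L$ of layers, all nodes in both graphs hold the identical vector $c^{(L)}$. A sum- or mean-pooling readout then gives $h_{G_1} = |V| \cdot c^{(L)} = h_{G_2}$ (respectively $c^{(L)} = c^{(L)}$ for mean pooling), so the two non-isomorphic graphs receive the same graph-level representation and any downstream classifier must assign them the same label, forcing misclassification whenever their true labels differ.

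The main obstacle, and the step requiring the most care, is verifying that the attention mechanism genuinely fails to break the symmetry rather than assuming it. It is tempting to invoke the general fact that message-passing GNNs (including GAT) are bounded in expressive power by the $1$-WL test, but I prefer the direct argument above because it makes transparent \emph{why} GAT specifically collapses: attention can only reweight neighbors using feature differences, and under structurally uninformative (constant) features those differences vanish, so the learned attention degenerates to uniform averaging exactly as in a GCN. The subtle point to state explicitly is the role of the $2$-regularity of both graphs, which guarantees that the uniform attention weight $1/\deg(u)$ takes the \emph{same} value across both graphs; without equal degrees the collapse could be broken by the normalization. I would also remark that this is precisely the failure mode that the ARW structural embedding $h_v^{(s)}$ in GSAT is designed to remedy, since anonymous-walk features distinguish a single $6$-cycle from two triangles and thereby yield non-uniform, structure-aware attention coefficients that separate $G_1$ from $G_2$.
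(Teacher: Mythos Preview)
Your proof is correct and in fact more rigorous than the paper's own argument. The paper proceeds abstractly: it writes out the GAT update rule, asserts that ``since GAT message passing is purely feature-driven without explicit structural encoding'' one has $h_i^{(L)}(G_1)=h_i^{(L)}(G_2)$ for all $i$, and concludes that global pooling collapses the two representations. No explicit pair of graphs is ever exhibited, and the key equality of node embeddings is stated rather than derived. By contrast, you instantiate the classical $1$-WL failure pair $C_6$ versus $C_3\sqcup C_3$, model ``no structural information'' by constant initial features, and prove by induction that the attention logits are identical everywhere, forcing $\alpha_{u,v}=1/\deg(u)=1/2$ uniformly across both graphs thanks to $2$-regularity. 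This makes the collapse a calculation rather than an assertion, and it isolates exactly which hypothesis (equal degrees plus feature-uninformative inputs) is doing the work---something the paper's proof leaves implicit. What the paper's route buys is brevity and generality of phrasing; what yours buys is an actual witness and a transparent mechanism, which also connects cleanly to the remark that ARW features would separate the $6$-cycle from the two triangles.
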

\begin{proof}
Consider a GAT layer that updates node embeddings using self-attention. Let $h_i^{(l)}$ be the hidden representation of node $i$ at layer $l$. The update rule for GAT is given by:  
\begin{equation}
   h_i^{(l+1)} = \sigma \left( \sum_{j \in \mathcal{N}(i)} \alpha_{ij}^{(l)} W h_j^{(l)} \right)
\end{equation}
 where $W$ is a trainable weight matrix, $\sigma$ is a nonlinearity, and $\alpha_{ij}^{(l)}$ is the learned attention coefficient:
\begin{equation}
\alpha_{ij}^{(l)} = \frac{\exp\left(\text{LeakyReLU} (a^\top [W h_i^{(l)} \| W h_j^{(l)}])\right)}{\sum_{k \in \mathcal{N}(i)} \exp\left(\text{LeakyReLU} (a^\top [W h_i^{(l)} \| W h_k^{(l)}])\right)}
\end{equation}
 The attention mechanism allows nodes to weigh their neighbors differently but does not inherently incorporate global graph structure unless explicitly encoded. After $L$ layers of GAT, a global pooling function $P$ aggregates node embeddings into a single graph-level representation:
\begin{equation}
 h_G = P(\{ h_i^{(L)} \mid i \in V \})
\end{equation}
where $P$ is typically a sum, mean, or max function. Since $P$ is permutation-invariant, it treats graphs with the same set of node embeddings as identical. Now we aim to remove the structural ambiguity without structural information. Consider two non-isomorphic graphs $G_1$ and $G_2$ with the same node features but different structures. Since GAT message passing is purely feature-driven without explicit structural encoding, it follows that:
\begin{equation}
h_i^{(L)} (G_1) = h_i^{(L)} (G_2) \quad \forall i \in V
\end{equation}
leading to the same global representation:
\begin{equation}
h_{G_1} = P(\{ h_i^{(L)} (G_1) \}) = P(\{ h_i^{(L)} (G_2) \}) = h_{G_2}
\end{equation}
Since $G_1 \not\simeq G_2$ but $h_{G_1} = h_{G_2}$, the classifier cannot distinguish them, causing misclassification and poor generalization. To ensure that $G_1$ and $G_2$ are mapped to distinct embeddings, structural encodings (ARW embedding) must be involved in node representations:
 \begin{equation}\label{eq-concat}
 X' = [X \| ARW]
 \end{equation}
Incorporating $ARW$ alters the attention coefficients $\alpha_{ij}$ and the final embeddings $h_G$, ensuring that $h_{G_1} \neq h_{G_2}$, which improves generalization. 
\end{proof}

\begin{table}[h]
\centering
\caption{Hyperparameter settings used in all experiments.}
\label{tab:hyperparameters}
\small
\renewcommand{\arraystretch}{1.2}
\scalebox{1.0}{
\begin{tabular}{c c}
\hline
\textbf{Hyperparameter}         & \textbf{Value}       \\
\hline
Batch size                     & 32                  \\
Number of pooling layers       & 14                  \\
Number of attention heads ($h$)& 5                   \\
Epochs                         & 100                 \\
Hidden dimension               & 64                  \\
Learning rate                  & $1 \times 10^{-3}$  \\
Optimizer                      & Adam                \\
\hline
\end{tabular}
}
\end{table}

\section{Step-by-step: How Anonymous Walks Are Mapped to Embeddings}\label{ap:simple}
Let \( w_i \) be a random walk starting from a node in graph \( G \). 
The embedding \( \Phi(w_i) \in \mathbb{R}^d \) is obtained through the following steps:
\begin{enumerate}
    \item \textbf{Walk Generation} \par
    Sample a random walk from the graph, e.g., \( w = [v_1, v_2, v_1] \).

    \item \textbf{Anonymous Walk Encoding} \par
    Replace each node in the walk with the index of its first occurrence to get an anonymous pattern.
    For example, \( [v_1, v_2, v_1] \mapsto [0, 1, 0] \).

    \item \textbf{Pattern Indexing} \par
    Assign each unique anonymous walk pattern a unique ID from the set of all possible patterns.
    \begin{equation}
    \begin{split}
        \texttt{pattern\_to\_index}([0,1,0]) &= 0, \\
        \texttt{pattern\_to\_index}([0,1,2]) &= 1, \\
         \ldots  &= \ldots \\
    \end{split}
    \end{equation}

    \item \textbf{Embedding Lookup} \par
    Define an embedding matrix \( \Phi \in \mathbb{R}^{|\mathcal{A}| \times d} \), where \( \mathcal{A} \) is the set of all anonymous walk patterns. The walk embedding is retrieved via:
    \begin{equation}
        \Phi(w_i) = \Phi[\texttt{pattern\_to\_index}(w_i)]
    \end{equation}
A simple code is shown in figure~\ref{code:ARW} for implementation.
    \item \textbf{Training with Skip-Gram} \par
    The embeddings \( \Phi(w) \) are trained using the Skip-gram objective to maximize co-occurrence similarity between center-context walk pairs within a sliding window.
\end{enumerate}

\begin{figure*}[t]
\centering
\begin{minipage}{0.95\textwidth}
\begin{verbatim}
import torch
from torch import nn

# Example: assume 1000 unique walk patterns, 128-dim embeddings
embedding_table = nn.Embedding(num_embeddings=1000, embedding_dim=128)

# Suppose walk pattern [0,1,0] has index 0
walk_index = 0
walk_embedding = embedding_table(torch.tensor(walk_index))  # shape: [128]
\end{verbatim}
\end{minipage}
\caption{PyTorch code for anonymous walk embedding lookup.}
\label{code:ARW}

\end{figure*}

\section{Examples from QM9 dataset}\label{ap:QM9dataset}
Figure~\ref{fig:4atom15nodes} and figure~\ref{fig:5atom15nodes} show some molecules in QM9 dataset that have 4 atom types and 5 atom types respectively. The learned filters should cover different scales of structures such as filters with 5,6,7, and 8 nodes that are shown in figures~\ref{fig:5nodefilters}, \ref{fig:6nodefilters},\ref{fig:7nodefilters}, and \ref{fig:8nodefilters} respectively.
\begin{figure}[H]
    \centering
    \fbox{\includegraphics[width=0.3\textwidth]{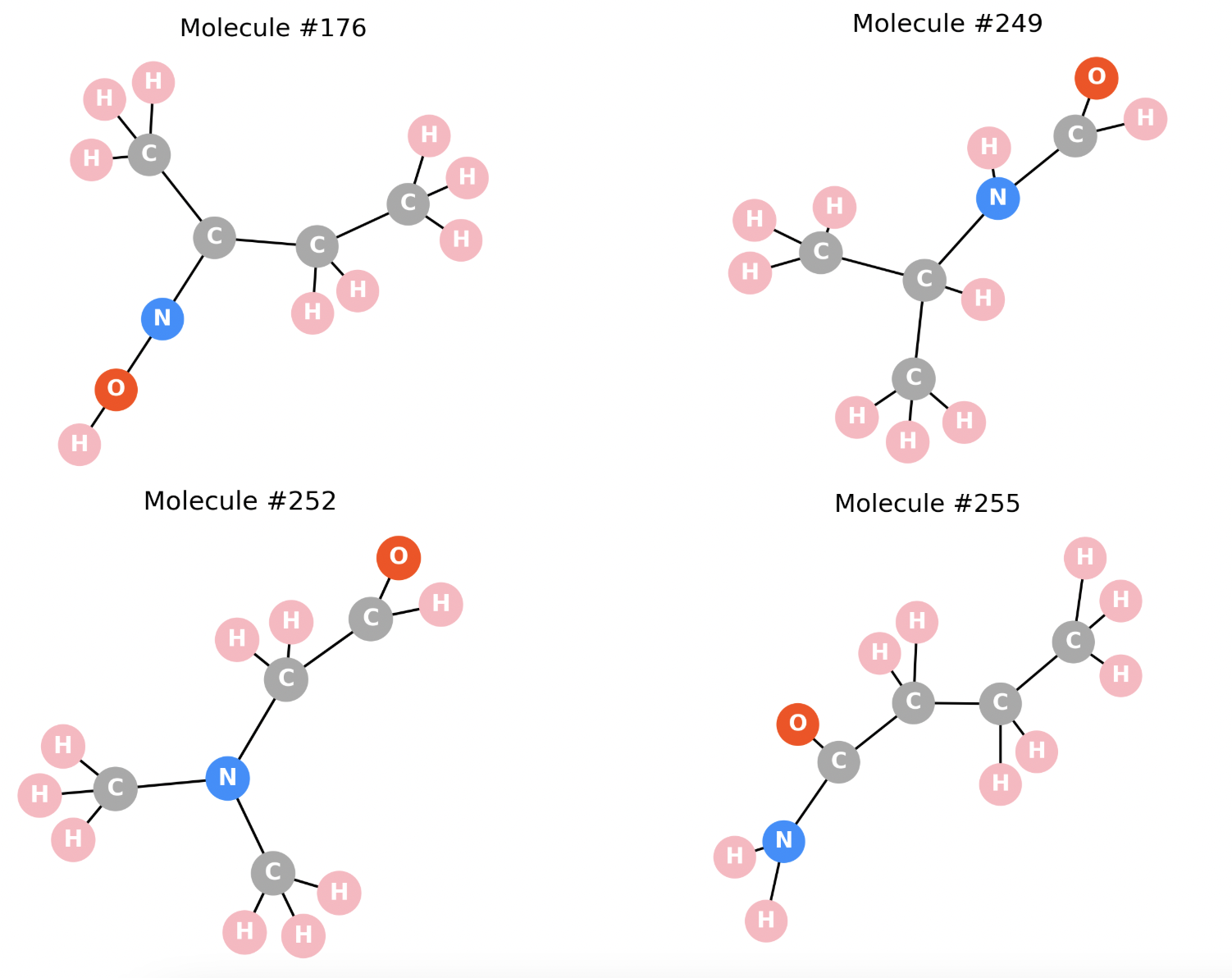}}
    \caption{Molecules in QM9 with 4 atom types and 15 nodes}
    \label{fig:4atom15nodes}
\end{figure}

\begin{figure}[H]
    \centering
    \fbox{\includegraphics[width=0.3\textwidth]{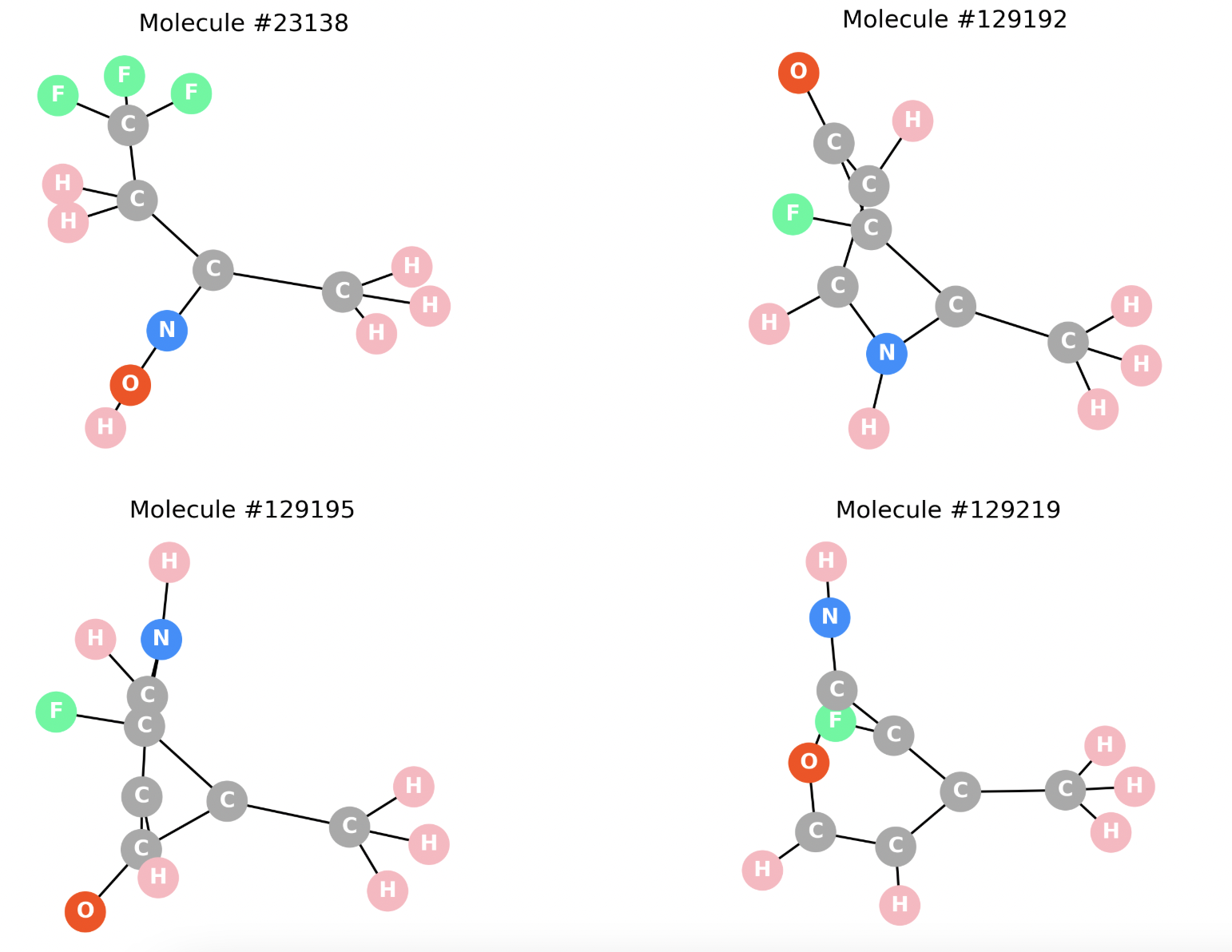}}
    \caption{Molecules in QM9 with 5 atom types and 15 nodes}
    \label{fig:5atom15nodes}
\end{figure}

\vspace{1em}

\section{Learning Filters}
Figures~\ref{fig:5nodefilters},\ref{fig:6nodefilters}, \ref{fig:7nodefilters},\ref{fig:8nodefilters} show filters with 5,6,7 and 8 nodes respectively that are learned by our algorithm. Figure~\ref{fig:zincfilters} shows the learned filters for ZINC dataset.
\begin{figure}[H]
    \centering
    \fbox{\includegraphics[width=0.3\textwidth]{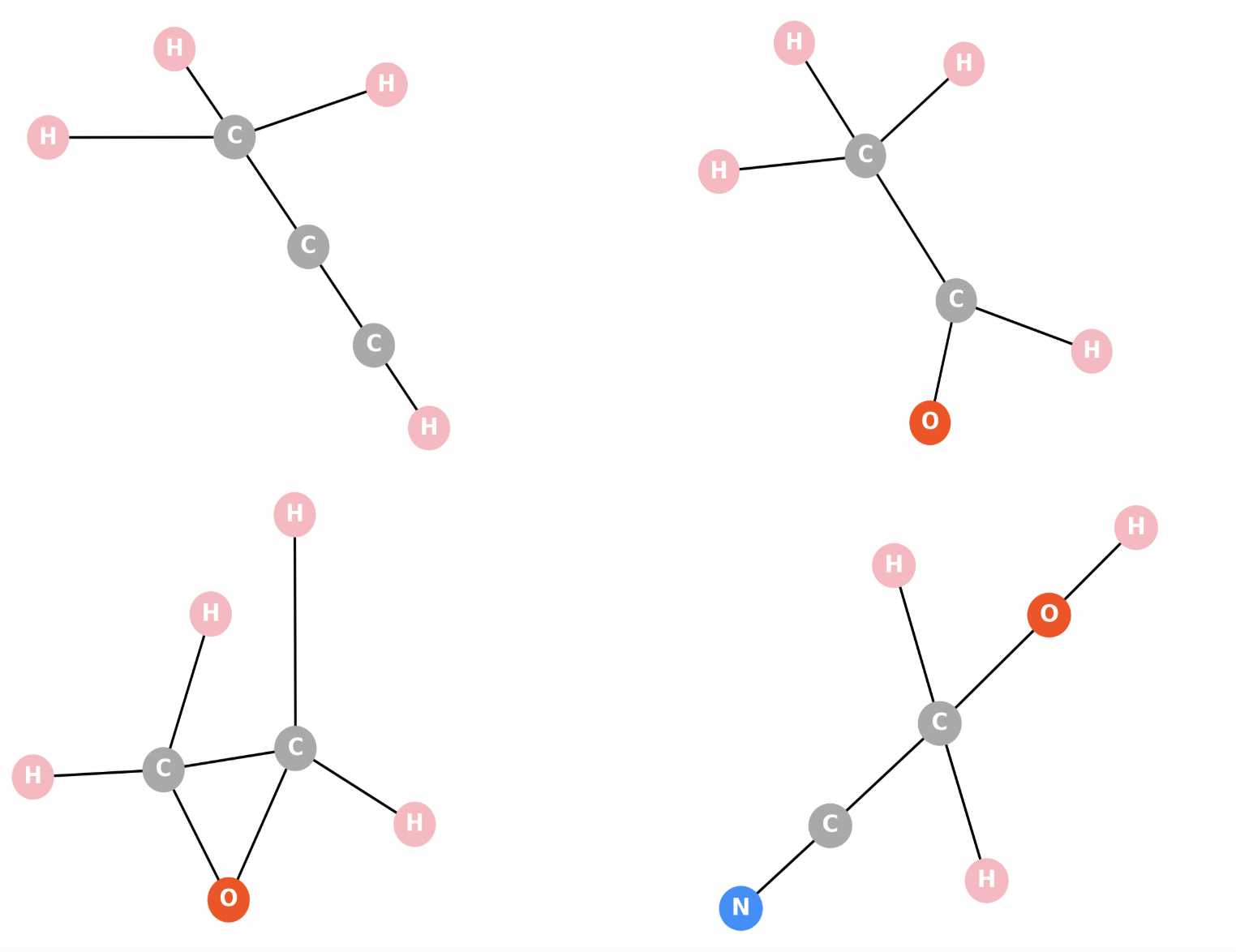}}
    \caption{7 node learned filters on QM9 dataset}
    \label{fig:7nodefilters}
\end{figure}

\vspace{1em}

\begin{figure}[H]
    \centering
    \fbox{\includegraphics[width=0.3\textwidth]{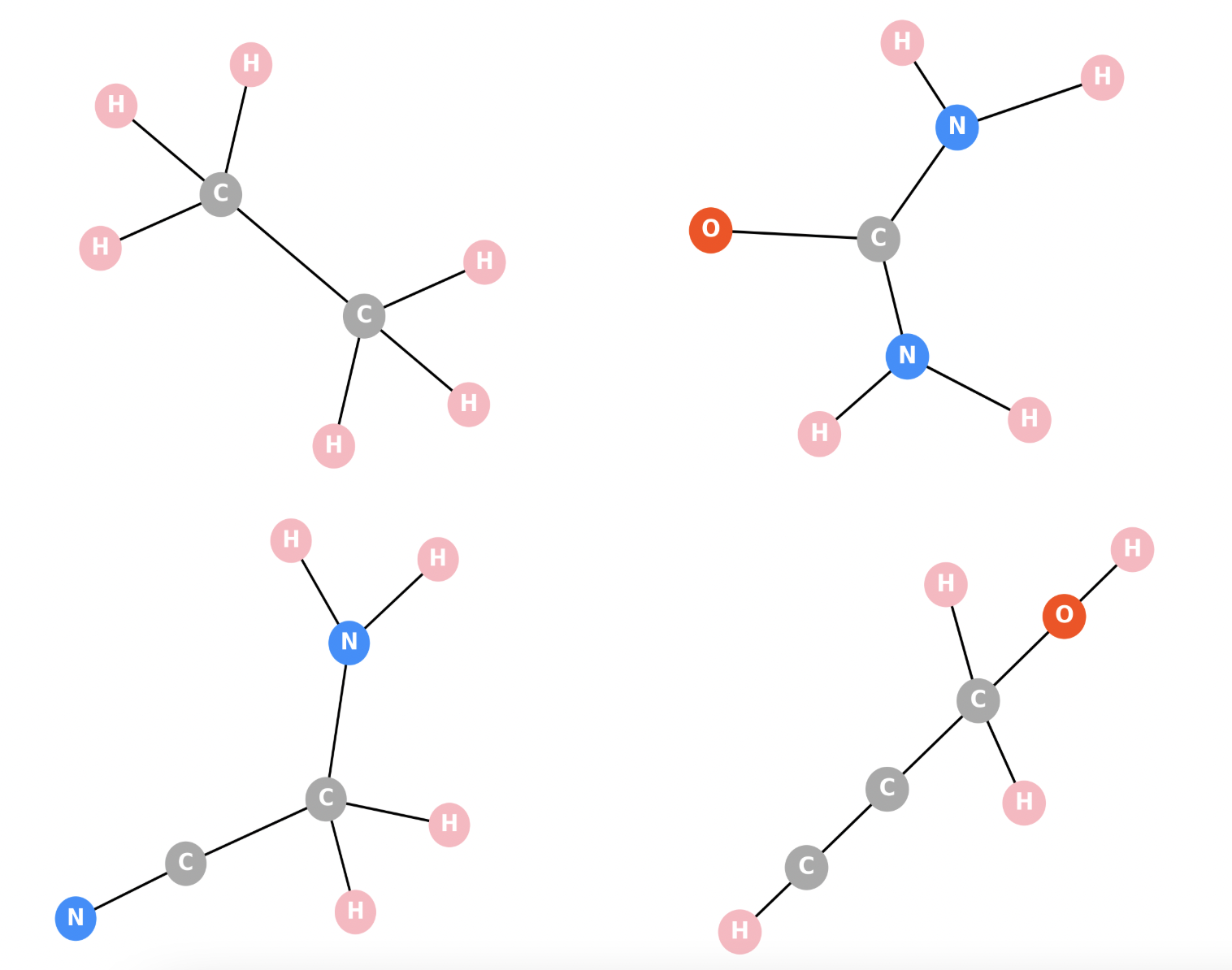}}
    \caption{8 node learned filters on QM9 dataset}
    \label{fig:8nodefilters}
\end{figure}

\begin{figure}
    \centering
    \fbox{\includegraphics[width=0.3\textwidth]{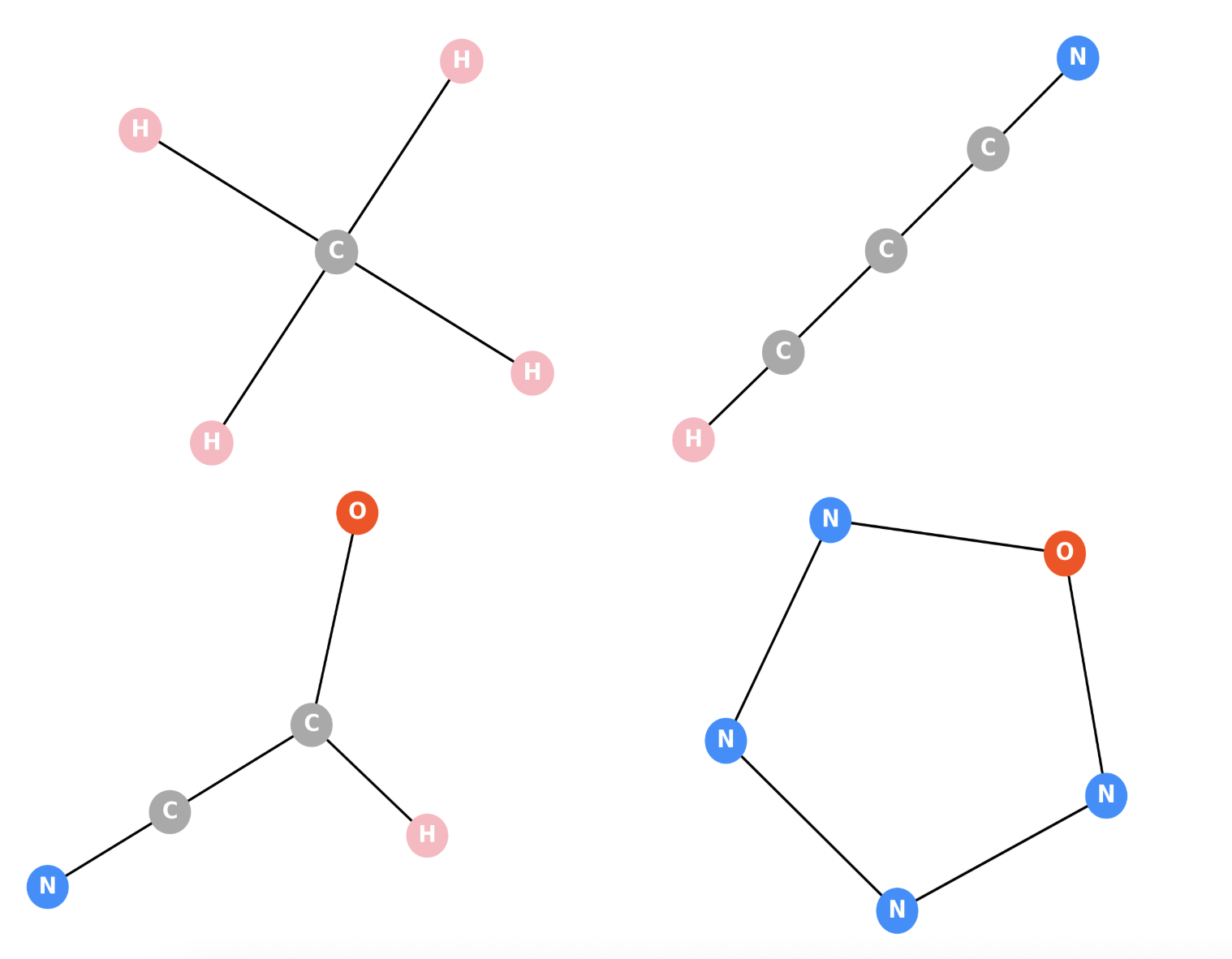}}
    \caption{5 node learned filters on QM9 dataset}
    \label{fig:5nodefilters}
\end{figure}

\vspace{1em}

\begin{figure}[H]
    \centering
    \fbox{\includegraphics[width=0.3\textwidth]{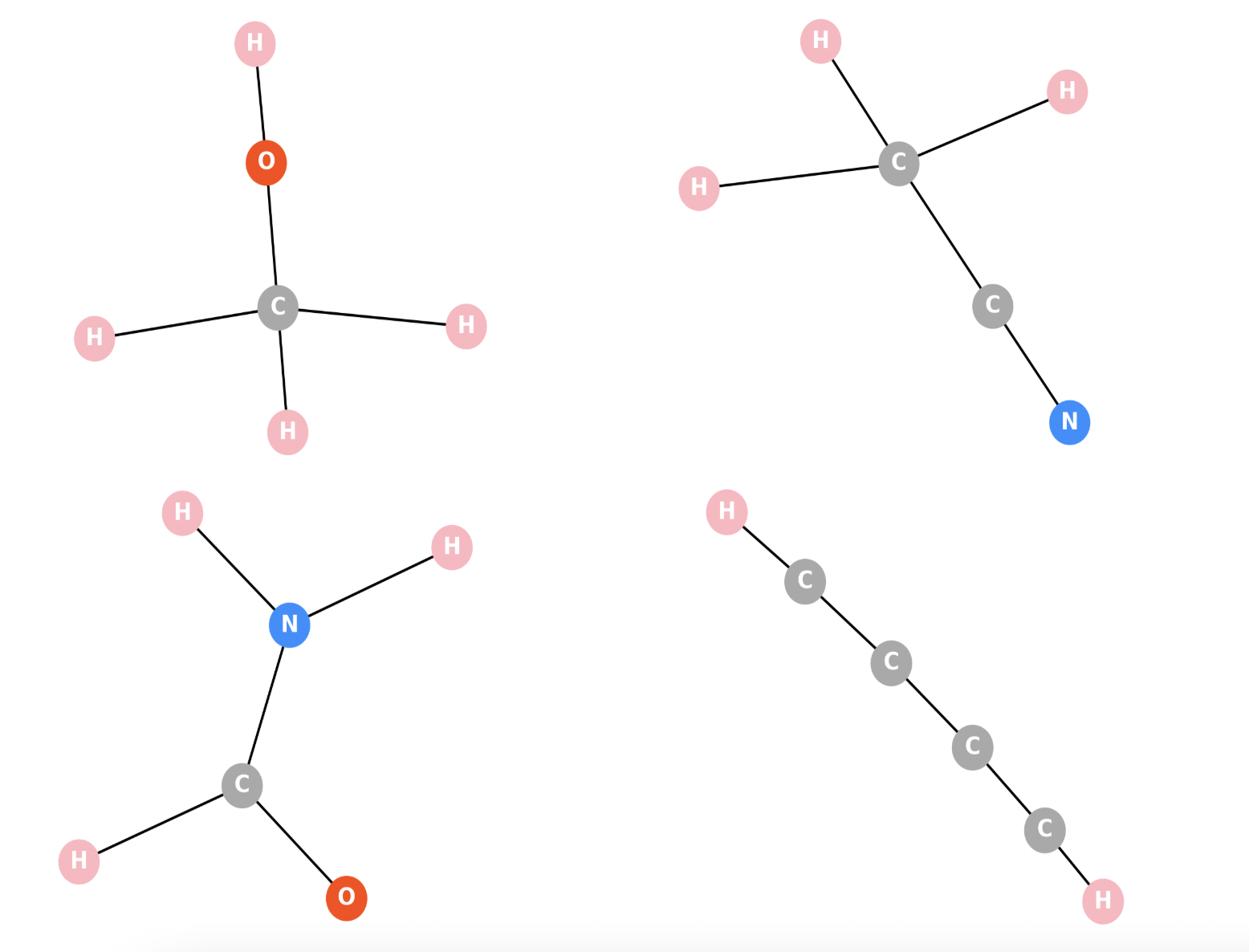}}
    \caption{6 node learned filters on QM9 dataset}
    \label{fig:6nodefilters}
\end{figure}

\vspace{1em}

\begin{figure}[H]
    \centering
    \fbox{\includegraphics[width=0.3\textwidth]{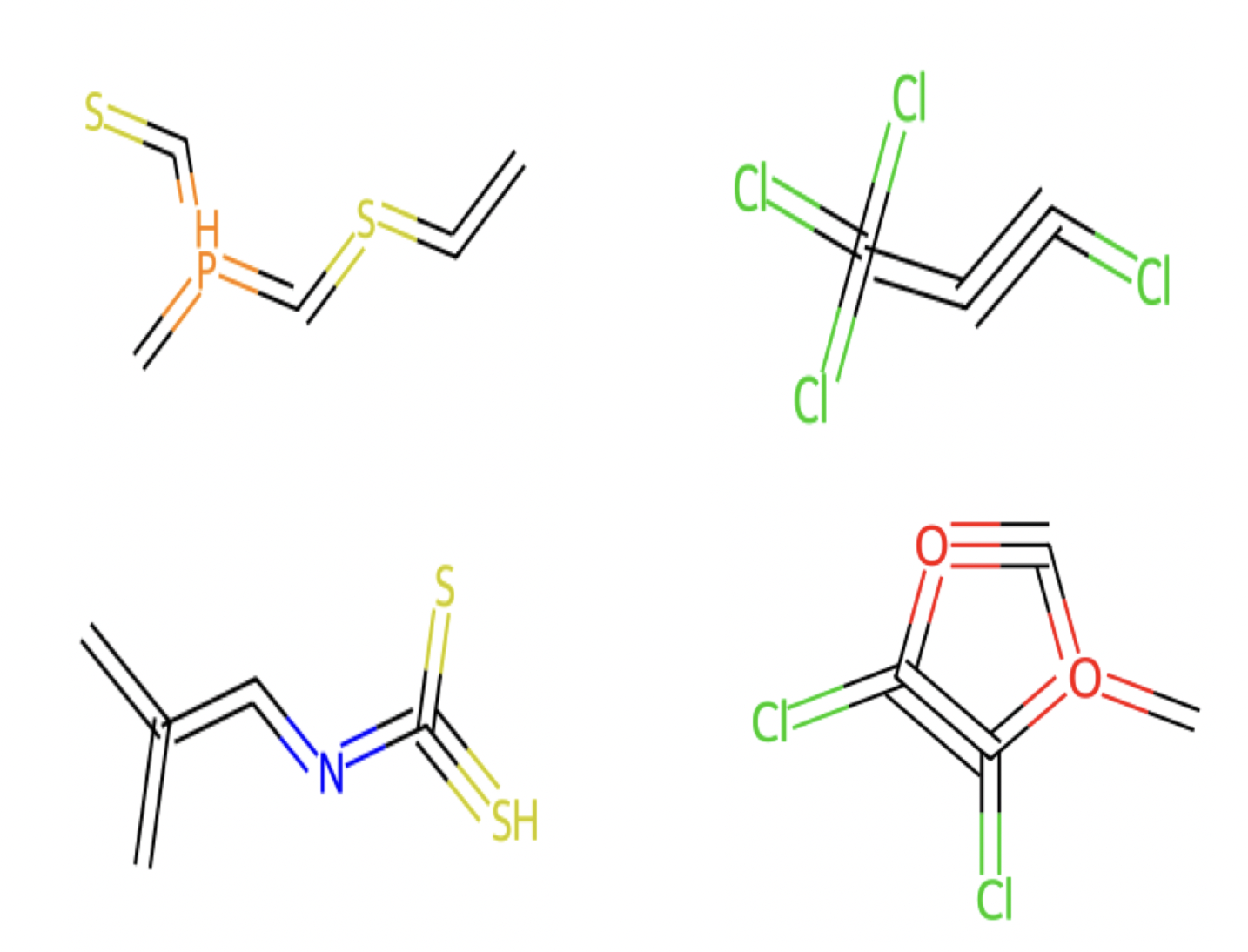}}
    \caption{some learned filters for ZINC dataset}
    \label{fig:zincfilters}
\end{figure}

\section{Ablation Study}
It can be observed in Table~\ref{tab:ablation_walklength_combined} that \texttt{walk\_length} plays an important role on performance of graph classification on Protein and MUTAG. Different \texttt{walk\_length} reveals different patterns that are responsible for graph rationales.
\begin{table*}[h]
\centering
\caption{Ablation study on the effect of \texttt{walk\_length} on the Protein and MUTAG datasets (mean ± std, \%). Results are averaged over 10 runs. \textbf{Bold} indicates the best, \underline{underline} indicates the second best.}
\label{tab:ablation_walklength_combined}
\small
\renewcommand{\arraystretch}{1.2}
\scalebox{0.95}{
\begin{tabular}{c c c c c}
\hline
\textbf{Dataset} & \textbf{Model} & \texttt{walk\_length}=10 & \texttt{walk\_length}=20 & \texttt{walk\_length}=40 \\
\hline
Protein & GIN-gp & 71.61 ± 1.06 & 71.43 ± 1.26 & 71.32 ± 1.46 \\
Protein & GCN-gp & \underline{72.61 ± 3.36} & \underline{72.43 ± 1.36} & \underline{72.52 ± 3.36} \\
Protein & GAT-gp & \textbf{72.83 ± 1.59} & \textbf{72.71 ± 1.14} & \textbf{72.61 ± 2.12} \\
\hline
MUTAG & GIN-gp & 82.11 ± 1.04 & 82.43 ± 1.27 & 82.12 ± 1.45 \\
MUTAG & GCN-gp & \underline{82.28 ± 3.25} & \underline{82.11 ± 1.24} & \underline{82.35 ± 1.70} \\
MUTAG & GAT-gp & \textbf{82.61 ± 1.49} & \textbf{82.31 ± 3.18} & \textbf{82.37 ± 2.12} \\
\hline
\end{tabular}
}
\end{table*}

\section{Graph Embedding Visualization}
Figure~\ref{fig:tsne-QM9-dipole-test} and figure~\ref{fig:tsne-QM9-isotropic-test} shows t-sne visualilization of the embedding vectors in supervised learning of dipole moment and isotropic polarizability property prediction respectively. Note that these figures are the graph embeddings at test time only. See the appendix~\ref{ap:QM9dataset} for training time.
\begin{figure}[H]
    \centering
    \fbox{\includegraphics[width=0.3\textwidth]{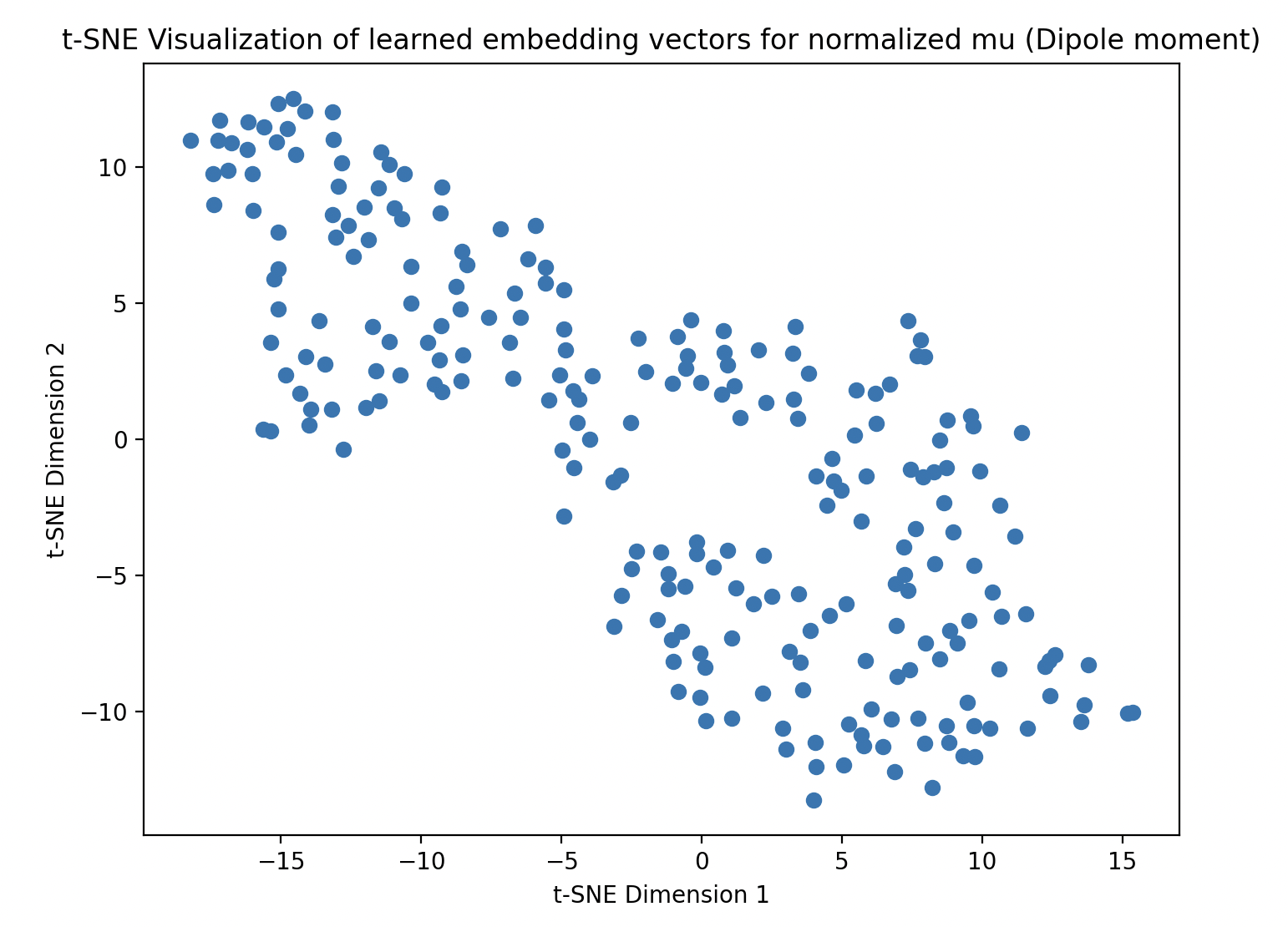}}
    \caption{2D visualization of graph embeddings based on dipole moment supervision on test data.}
    \label{fig:tsne-QM9-dipole-test}
\end{figure}

\begin{figure}[H]
    \centering
    \fbox{\includegraphics[width=0.3\textwidth]{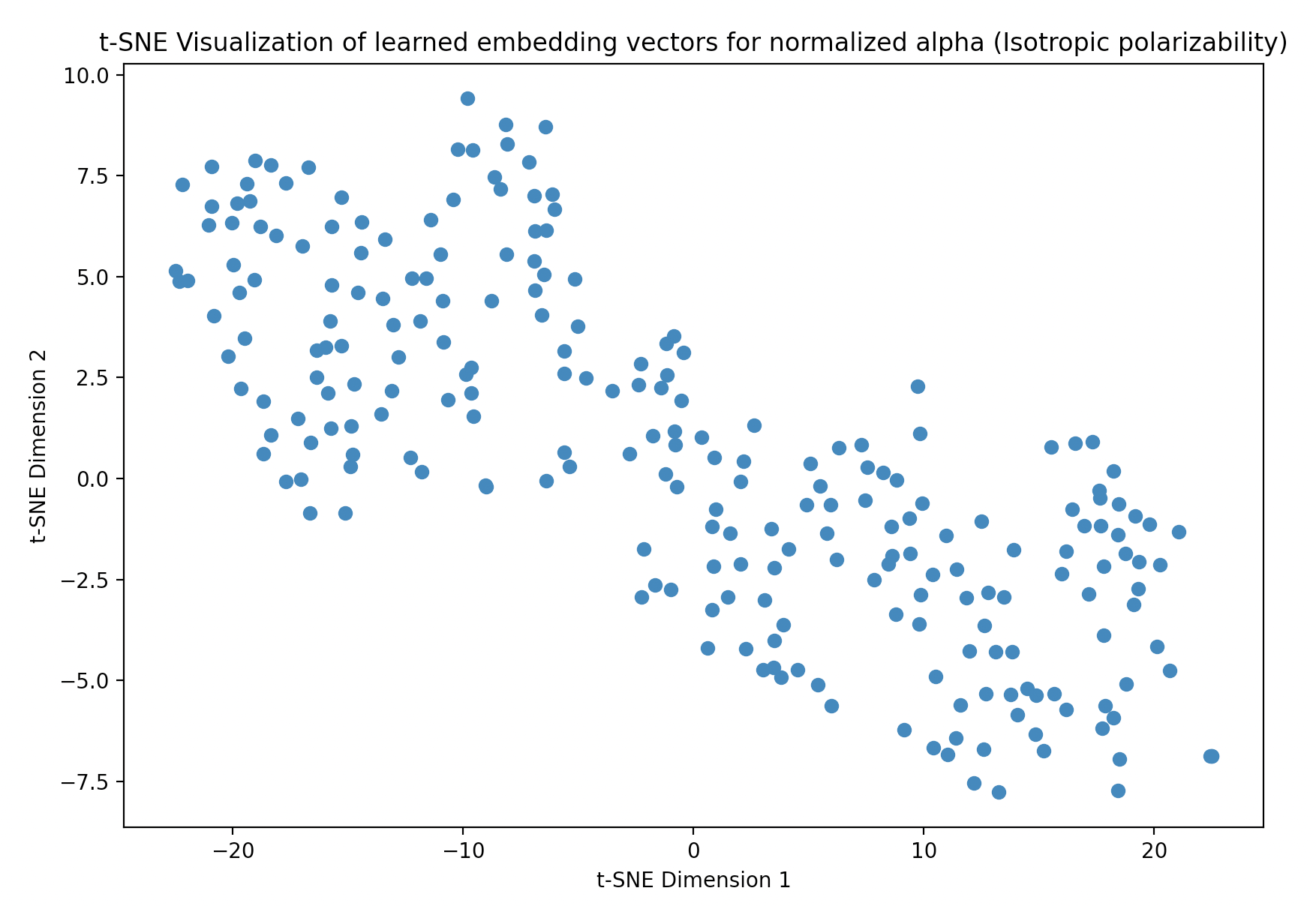}}
    \caption{2D visualization of graph embeddings based on Isotopic polarizability supervision on training data.}
    \label{fig:tsne-QM9-isotropic-test}
\end{figure}

Figure~\ref{fig:tsne-QM9-isotropic-training} shows the graph embedding of training data after mean pooling over node embeddings of individual graphs. 
\begin{figure}[H]
    \centering
    \fbox{\includegraphics[width=0.3\textwidth]{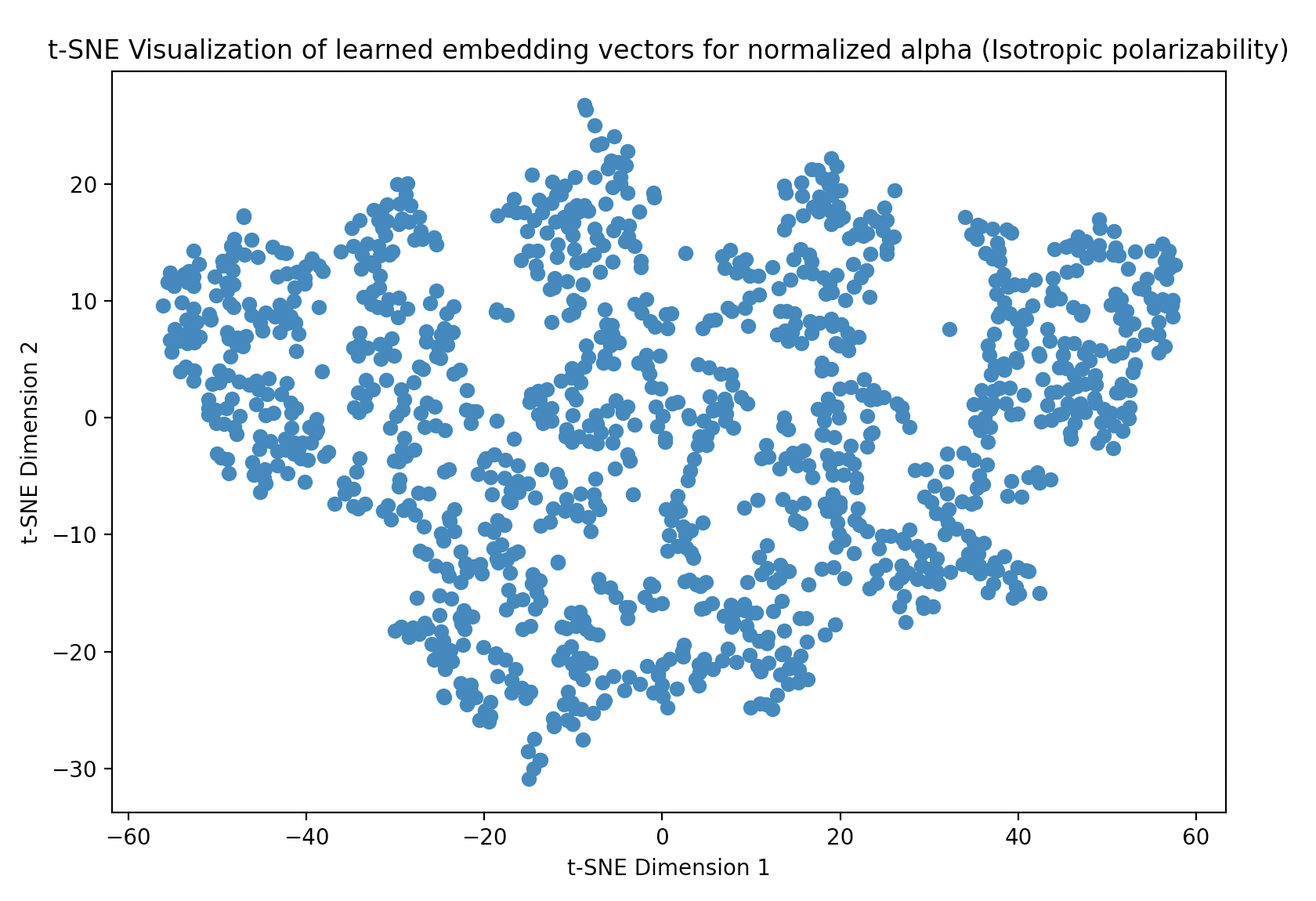}}
    \caption{2D visualization of graph embeddings based on isotropic polarizability supervision.}
    \label{fig:tsne-QM9-isotropic-training}
\end{figure}

\section{examples from ZINC dataset}\label{ap:ZINCdataset}
Figure~\ref{fig: zinc-data} show some molecules from ZINC dataset.  
\begin{figure}[H]
    \centering
    \fbox{\includegraphics[width=0.3\textwidth]{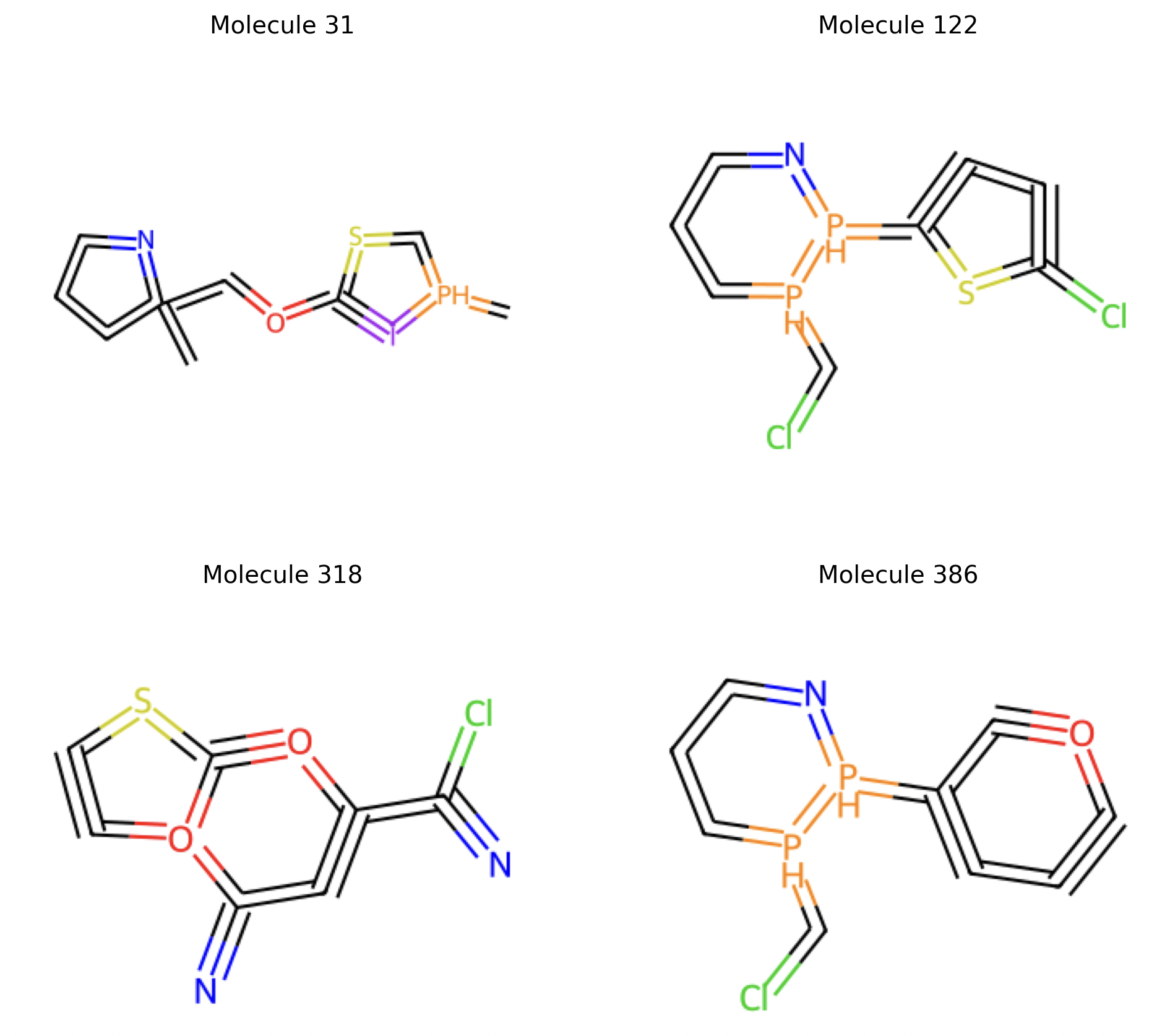}}
    \caption{examples from ZINC dataset}
    \label{fig: zinc-data}
\end{figure}

\section{ZINC Features}
In the ZINC molecular graph dataset, node features correspond to atom-level properties. Each node is encoded by a small set of discrete chemical attributes, typically represented as integers or one-hot embeddings before being fed into a GNN.
\begin{table}[h]
\centering
\caption{Node features (atom types) in the ZINC dataset. Each node is represented as a one-hot vector over 28 possible atom types.}
\label{table:zincfeatures}
\begin{tabular}{ll}
\hline
Index & Atom Type \\ \hline
1  & Br  \\
2  & C   \\
3  & Cl  \\
4  & F   \\
5  & I   \\
6  & N   \\
7  & O   \\
8  & P   \\
9  & S   \\
10 & Se  \\
11 & Si  \\
12 & B   \\
13 & H   \\
14 & Li  \\
15 & Na  \\
16 & K   \\
17 & Ca  \\
18 & Al  \\
19 & Fe  \\
20 & As  \\
21 & Zn  \\
22 & Cu  \\
23 & Ni  \\
24 & Mn  \\
25 & Co  \\
26 & Cr  \\
27 & Mg  \\
28 & Pd  \\
29 & Sn  \\ \hline
\end{tabular}
\label{tab:zinc_node_features}
\end{table}

\end{document}